\theoremstyle{plain}
\newtheorem{theorem}{Theorem}[section]
\newtheorem{lemma}[theorem]{Lemma}
\theoremstyle{definition}
\theoremstyle{remark}
\title{CUDC: A Curiosity-Driven Unsupervised Data Collection Method with Adaptive Temporal Distances for Offline Reinforcement Learning}
\author{
    %Authors
    % All authors must be in the same font size and format.
    Chenyu Sun\textsuperscript{\rm 1,2,3}, Hangwei Qian\textsuperscript{\rm 4}, Chunyan Miao\textsuperscript{\rm 1,2,3}
}
\title{My Publication Title --- Single Author}
\author {
    Author Name
}
\title{My Publication Title --- Multiple Authors}
\author {
    % Authors
    First Author Name\textsuperscript{\rm 1},
    Second Author Name\textsuperscript{\rm 2},
    Third Author Name\textsuperscript{\rm 1}
}
\begin{document}

\maketitle

\begin{abstract}
Offline reinforcement learning (RL) aims to learn an effective policy from a pre-collected dataset.
%It is often focused on developing sophisticated learning algorithms, with less emphasis on improving the quality of the pre-collected dataset. 
Most existing works are to develop sophisticated learning algorithms, with less emphasis on improving the data collection process. Moreover, it is even challenging to extend the single-task setting and collect a task-agnostic dataset that allows an agent to perform multiple downstream tasks. In this paper, we propose a \textbf{C}uriosity-driven \textbf{U}nsupervised \textbf{D}ata \textbf{C}ollection (CUDC) method to expand feature space using adaptive temporal distances for task-agnostic data collection %ultimately improve learning efficiency and capabilities 
in multi-task offline RL. To achieve this, %we quantify the agent's internal belief to estimate the probability of the $k$-step future states being reachable from the current states. 
%Different from existing approaches that implicitly assume limited feature space with a fixed temporal distance between current and next states, 
CUDC estimates the probability of the $k$-step future states being reachable from the current states, and adapts how many steps into the future that the dynamics model should predict.
With this adaptive reachability mechanism in place, the feature representation can be diversified, and the agent can navigate itself to collect higher-quality data with curiosity. Empirically, CUDC surpasses existing unsupervised methods in efficiency and learning performance in various downstream offline RL tasks of the DeepMind control suite.
\end{abstract}

\section{Introduction}

    Deep reinforcement learning has achieved remarkable breakthroughs in various fields, such as games, robotics, and navigation in virtual environments \citep{kiran2021deep, singh2022reinforcement, ijcai2022p478}. %In online RL, agents continuously update their policy by actively interacting with the environment, allowing them to acquire different skills. 
    However, real-time interaction with the environment under online RL settings may not always be feasible due to cost, safety, or ethical concerns \citep{kiran2021deep, singh2022reinforcement}. As a result, offline RL has gained popularity in recent years to cope with limited interactions, where agents learn a policy exclusively from a previously-collected dataset. The popular offline RL benchmarks such as D4RL \cite{fu2020d4rl} and RL Unplugged \cite{gulcehre2020rl} combine data from supervised online RL training runs with expert demonstrations, exploratory agents, and hand-coded controllers. However, collecting expert data can be time-consuming and expensive, and it may not always be available. In such cases, unsupervised methods, such as those described by ExORL \cite{yarats2022don}, can be used to collect data as a distinct contribution for offline RL \cite{prudencio2022survey}. These methods aim to explore the environment and learn from the intrinsic rewards generated by the agent, without the need for supervision, to collect diverse data.

	    \begin{figure*}[t]
    \begin{center}
        \vspace{-0.5mm}
        \includegraphics[width=0.98\textwidth]{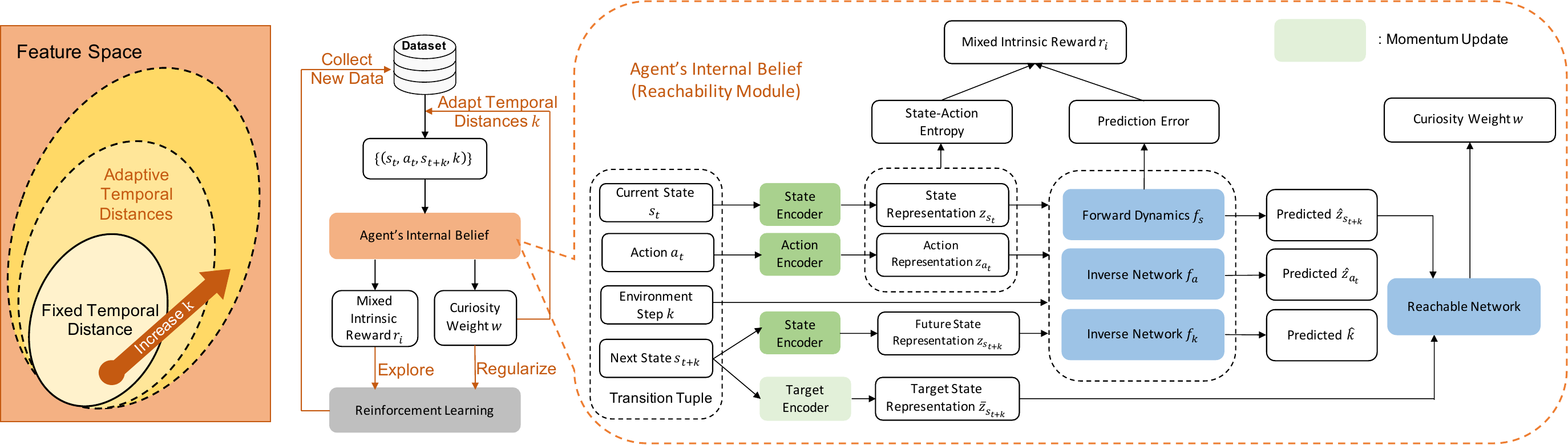}
    \end{center}
    \vspace{-3.5mm}
    \caption{\small %Curiosity-driven Unsupervised Data Collection (CUDC): The left diagram illustrates the intuitive relationship between fixing (existing works) and adapting (CUDC) the temporal distance on feature space. The middle diagram describes the overall CUDC framework. The reachability between the $k$-step future and current states is measured by agent's internal belief. It outputs the mixed intrinsic reward to encourage diverse exploration, and the curiosity weight to adapt the temporal distance and regularize the backbone RL. Thus, new data can be collected until the capacity is reached. The right diagram shows how the agent measures and updates its internal belief on the probability of the $k$-step future states being reachable from the current states.
    Curiosity-driven Unsupervised Data Collection (CUDC): The left diagram depicts the relationship between fixing (existing works) and adapting (CUDC) temporal distance in feature space. The middle diagram outlines the CUDC framework, measuring reachability between $k$-step future and current states using the agent's internal belief. It generates mixed intrinsic rewards for diverse exploration and curiosity weight to adapt temporal distance, regulating the RL backbone. This process continues until the capacity is reached. The right diagram illustrates how the agent assesses and updates its internal belief regarding the probability of $k$-step future states being reachable from current states.
    }
    \label{workflow}
    \vskip -0.15in
\end{figure*}  \normalsize
 
    Despite the popularity of offline RL, existing works have mainly focused on model-centric practices, continually developing new algorithms \citep{kumar2020conservative, kumar2021should}. These algorithms are typically evaluated on the same task for which the dataset was collected, and the learned policy can be pessimistic in out-of-distribution states and actions, leading to poor generalization in unseen downstream tasks. Recently, data-centric approaches have become emerging, emphasizing the importance of training data quality over algorithmic advances \citep{ motamedi2021data, patel2022advances}. To improve training data quality, researchers have explored selecting the most critical samples or re-weighting \citep{wu2021uncertainty} all samples in the offline RL algorithms. However, these methods are restricted to a single training data distribution and cannot be applied to multi-task settings with distribution shifts.
    To address this challenge, we propose to improve the data collection process directly through feature space expansion, where the distributions naturally span during diverse exploration. This approach is applicable to the multi-task setting, enabling us to obtain more diverse and high-quality data for offline RL.
    
    %By analyzing these challenges, ExORL \citep{yarats2022don} preliminarily shows that unsupervised RL methods are superior to supervised methods in collecting the dataset that allows even vanilla off-policy RL algorithms to effectively learn offline and acquire different skills. By taking a deeper look into these existing methods, we are surprised to find that they pre-define a fixed temporal distance $k$ between current states and future states during data collection. Nevertheless, this practice is sub-optimal and implicitly limits the diversity in the learned feature representation, as illustrated in Figure~\ref{workflow}~(left). Based on the direct connection with feature space, we propose to adapt the temporal distance as a simple yet efficient way to enhance feature representation.
    Upon analyzing the current challenges faced in offline RL, the benchmark ExORL \citep{yarats2022don} has shown that unsupervised RL methods are more effective than supervised methods in collecting datasets that allow the vanilla off-policy RL algorithm to learn and acquire different skills as an offline RL agent. However, upon further examination of existing methods, we discovered that they rely on a fixed temporal distance $k$ between current and future states during data collection. This practice is sub-optimal and restricts the diversity of the learned feature representation, as illustrated in Figure~\ref{workflow}~(left). To address this limitation, we propose to adapt the temporal distance as a simple yet effective way to enhance the feature representation, as it has a direct connection with the feature space.
	%all of the evaluated methods simply assume a fixed feature space by relying on a fixed environment step $k$ between the current state and future state, limiting the autonomy.
	%can collect an optimal dataset such that the offline agent can perform consistently well in all downstream tasks.
	%As illustrated in Figure~\ref{workflow}~(left),  fixing the temporal distance between current and future states may limit the feature space and result in low-quality dataset, 
    
    To facilitate adaptation, exploiting reachability to more distant future states is desired. Reachability-based methods in RL aim to learn safe and efficient policies by considering reachable states under the current policy or value function \citep{savinov2018episodic, pere2018unsupervised, ivanovic2019barc, yu2022reachability}, but these approaches are not directly applicable. For example, \citet{savinov2018episodic} only considers binary reachability, and extensively compares to stored embeddings in memory. Additionally, the reachability in goal space exploration \cite{pere2018unsupervised} often requires kernel density estimation, which can increase computational cost substantially. Different from these, we propose a \textbf{C}uriosity-driven \textbf{U}nsupervised \textbf{D}ata \textbf{C}ollection (CUDC) method with a novel reachability module. Inspired by the fact that human curiosity can foster learning and is driven by novel knowledge beyond one's perception \citep{markey2014curiosity, sun2022psychological,sun2022cd}, CUDC facilitates data collection curiously without any task-specific reward. In particular, the reachability module estimates the probability of a $k$-step future state being reachable from the current state, with no episodic memory or feature space density modeling required. This module enables the agent to adaptively determine how many steps into the future that the dynamics model should predict, allowing for an enhanced feature representation to be learned. Compared with the existing unsupervised methods, it refrains from learning a fixed feature space. With this enhanced representation, CUDC utilizes a mixed intrinsic reward that encourages the agent to curiously explore meaningful state-action spaces and under-learned states. As a result, the collected dataset can lead to improved computational efficiency, sample efficiency, and learning performances in various downstream offline RL tasks.

	%Different from the existing work where agent's internal belief for curiosity is formulated as the prediction of a fixed-step future state \citep{pathak2017curiosity, burda2018large, pathak2019self}, we hypothesize that dynamically adjusting the step size $k$ on future states can help navigate agents to collect more useful task-agnostic data.% and such process should be facilitated by agent's internal belief in a curious manner. 
	%Intuitively, agents are encouraged to perform the $t+k$'s state prediction and should focus more on the samples with high curiosity caused by the high prediction errors, if agents can already predict the $t+1$'s state well. 
	%During the predictive process, agents can learn a more robust state representation and action representation for intrinsic reward shaping. The intrinsic reward in our method is a mixture of state-action entropy and prediction error to encourage curious exploration of both state-action space and under-learned states. As a result, the collected dataset can lead to improved sample efficiency and better performances in downstream offline learning.
	
	Our contributions can be summarized as follows. 
	1) We are the first to introduce reachability for improving data collection in offline RL, which is defined in a more efficient way and can enable the agent to navigate curiosity-driven learning coherently.
	2) We point out a common drawback of fixing the temporal distance in existing approaches, and empirically show that adapting the temporal distance in the reachability analysis can enhance feature representation by expanding the feature space.%in the collected data. % for multi-task offline RL. %improving the collected dataset quality for multi-task offline RL. 
	%2) We propose to self-supervise agent's internal belief on whether the future states are reachable from current states, which can effectively guide the step size increment and regularize the exploratory policy update in a curious manner. 
	3) With the enhanced representations, CUDC additionally incentivizes the agent to explore diverse state-action space as well as the under-learned states with high prediction errors through a mixed intrinsic reward and regularization. 
	%By extending the prediction to incrementally future states and incorporating reachability module, the learned state representation become more robust and therefore encourage a more comprehensive state-action space coverage for data collection. 
	%4) In addition to the uniform coverage of state-action space, agents are also intrinsically encouraged to explore the surprising samples with high prediction errors for greater dataset diversity. 
	4) Under the ExORL benchmark setting \citep{yarats2022don}, CUDC outperforms other unsupervised methods when collecting the task-agnostic dataset that can be used for offline learning in multiple downstream tasks from the DeepMind control suite~\citep{tassa2018deepmind}.
	
	% with improved sample efficiency and learning capabilities 

	\section{Related Works}
	\paragraph{Reachability in RL} \citet{savinov2018episodic} devised a reachability network to estimate how many environment steps to take for reaching a particular state. It intrinsically rewards the agent to explore the state that is unreachable %i.e., takes more than a fixed threshold step, 
    from other states in memory. However, this approach only considers the binary case of reachability, potentially being inefficient when comparing with stored states. In goal exploration tasks, \citet{pere2018unsupervised} defined the reachability of a goal with an estimated density and proposed to sample increasingly difficult goals to reach during exploration. While this approach can learn the goal space in an unsupervised manner, its sampling process requires a kernel density estimator, which can substantially increase computational cost. Following the similar idea, BARC \citep{ivanovic2019barc} adapts the initial state distribution gradually from easy-to-reach to challenging-to-reach goals with physical priors in hard robotic control tasks. %As a result, agents can perform well even in a hard robotic control task. 
    Recently, RCRL \citep{yu2022reachability} shows that leveraging reachability analysis can help learn an optimal safe policy by expanding the limited conservative feasible set to the largest feasible set of the state space. Different from these works, CUDC is efficient and easy to implement, as it directly adapts the temporal distance to perform increasingly challenging reachability analysis without extensive comparisons, kernel density estimation or physical priors.
	
	\paragraph{Curiosity-Driven RL}
	Curiosity-driven RL is essential for encouraging agents to explore tasks in a human-like manner, especially when task-specific rewards are sparse or absent \citep{sun2022psychological}. The main approach to curiosity-driven RL involves incorporating intrinsic rewards that motivate agents to explore based on different aspects of the state, including novelty, entropy \citep{seo2021state, liu2021behavior}, reachability \citep{savinov2018episodic}, prediction errors \citep{pathak2017curiosity}, complexity \citep{campero2020learning}, and uncertainty \citep{pathak2019self}. Another approach is to prioritize experience replay towards under-explored states \citep{jiang2021prioritized}. Curiosity can also be used to explore other components of RL, as seen in CCLF \citep{ijcai2022p478}. CUDC is the first method to curiously adapt the temporal distance to explore more distant future states in offline RL, which enhances the learned representation space with increasingly challenging prediction. In addition, CUDC also regularizes Q-learning with a curiosity weight as the sample importance to focus more on under-learned tuples. %More importantly, it enables the agent to enhance feature representation with information of the dynamics, by not only predicting the future but curiously adapting the number of environment steps to predict. 
	%CCLF \citep{ijcai2022p478} leverages curiosity to select complex samples, regularize the critic-actor updates and thereby enhance sample efficiency, but it works only with visual-based inputs and the regularization does not consider the importance among samples. 
	%In contrast, our proposed CUDC primarily focuses on state-based environments and regularizes the Q-learning by assigning the importance weights among samples. More importantly, it enables the agent to enhance feature representation with information of the dynamics, by not only predicting the future but curiously adapting the number of environment steps to predict. 
	
	%Moreover, we also extend the intrinsic reward formulation by considering both state-action entropy and prediction errors, ultimately improving the collected dataset comprehension.

	\paragraph{Unsupervised Data Collection}
	The ExORL benchmark \citep{yarats2022don} evaluates 9 unsupervised data collection algorithms, demonstrating superiority over supervised methods for multi-task offline learning. These unsupervised methods include knowledge-driven models like ICM \citep{pathak2017curiosity}, Disagreement \citep{pathak2019self}, and RND \citep{burda2019exploration}, which encourage exploration by maximizing prediction errors. Data-driven models like APT \citep{liu2021behavior} and ProtoRL \citep{yarats2021reinforcement} incentivize agents to uniformly explore the entire state space. Competence-based models like DIAYN \citep{eysenbach2018diversity}, SMM \citep{lee2019efficient}, and APS \citep{liu2021aps} encourage agents to learn diverse skills by leveraging prior information. However, all of these methods were originally designed for online pretraining and fine-tuning \citep{laskin2021urlb} and are not tailored for data collection. In contrast, CUDC is a novel method that gradually expands the feature space by exploiting reachability into more distant future states, rather than using a fixed temporal distance. %This allows the agent to explore more challenging and diverse states, leading to a better learned representation space. 
    Additionally, CUDC exploits importance weights to focus more on under-learned tuples, which is not considered in Explore2Offline \citep{lambert2022challenges}, another recent method that leverages intrinsic model predictive control for simulating trajectories.

	\section{Curiosity-Driven Unsupervised Data Collection (CUDC)}
	\subsection{Problem Setting}
	We consider the problem of multi-task offline learning, which consists of three main steps: data collection, reward relabeling, and downstream offline learning, as described in both ExORL \citep{yarats2022don} and Explore2Offline \citep{lambert2022challenges}. In the data collection phase, the exploratory agent (data collector) has access to a Markov Decision Process (MDP) environment with a state $s \in \mathcal{S}$, an action $a \in \mathcal{A}$ based on a policy $\pi(s)$, a transition probability $p(s' | s, a)$ mapping from the current state $s$ and action $a$ to the next state $s'$, a reward $r$, and a discount factor $\gamma \in [0, 1)$ weighting future rewards. The exploratory agent collects a dataset $\mathcal{D}$ of unlabeled tuples $(s, a, s')$ by interacting with the environment. %It is unsupervised in the sense that no task-specific reward is received from the environment. Therefore, it would need to rely on a self-generated intrinsic reward $r_i$ to encourage task-agnostic exploration based on its internal belief. 
	The second phase is to relabel the collected dataset $\mathcal{D}$ using the given reward function $r_{\tau}(s,a)$ about the downstream task $\tau$ for each tuple. It transfers information from task-agnostic exploration to downstream tasks. %For the choice of reward function, it can be standard or hand-designed reward functions \citep{tassa2018deepmind},  distribution of goals \citep{nasiriany2021disco}, learned function via inverse reinforcement learning \citep{eysenbach2020rewriting} and other mechanisms \citep{li2020generalized}.
	The last step is to perform multiple downstream tasks with an offline RL agent on the labeled dataset, without interacting with the environment to collect additional experiences. %It is prohibited to interact with the environment to collect any additional experience. 
	%As our primary focus is on data collection, we select the TD3 algorithm  \citep{fujimoto2018addressing} to evaluate the offline RL performance, which is the best offline agent reported in ExORL \citep{yarats2022don}.
	In this paper, we focus on the most challenging part of this problem, which is the task-agnostic data collection and we evaluate the quality of the collected dataset $\mathcal{D}$ in multiple downstream tasks.

	\subsection{Framework Overview}

As shown in Figure~\ref{workflow} (mid), we propose a \textbf{C}uriosity-driven \textbf{U}nsupervised \textbf{D}ata \textbf{C}ollection (CUDC) method, which employs DDPG \citep{lillicrap2015continuous} as the base RL algorithm for the exploratory agent. To encourage diverse exploration, we introduce a novel reachability module, illustrated in Figure~\ref{workflow} (right), that calculates the likelihood of reaching a future state $k$ steps ahead of the current state. With this module in place, the exploratory agent can be encouraged to diversely explore by a mixed intrinsic reward, and meanwhile regularize the critic-actor update to prioritize under-learned tuples. Most importantly, the temporal distance of $k$-step between current and future states is adaptively increased to incorporate the dynamics information in the learned feature representation. This adaptation results in a more diverse exploration and improved data collection quality. Further details are presented in Algorithm~\ref{algorithm}.

	%To incorporate the dynamics information in feature representation, CUDC aims to adaptively determine the number of environment steps to predict and explore the unreachable future states. 
	%As a result, this mechasim would allow the agent to gradually adapt to more challenging dynamics with better feature representation. %With this reachability mechanism, the agents are encouraged to navigate themselves with curiosity. 
	%Moreover, we additionally regularize the critic-actor update of the DDPG algorithm to focus more on under-learned tuples, and incorporate a mixed intrinsic reward to encourage more diverse exploration. 
	%Motivated by existing works where the agent's internal belief is measured by the prediction of future states with a fixed step size \citep{pathak2017curiosity, burda2018large, pathak2019self}, the main idea of CUDC is to dynamically navigate the agents to adjust the step size for future state prediction by a self-supervised curiosity mechanism. With this in place, we additionally regularize the critic-actor update of the DDPG algorithm to focus more on under-learned samples, and incorporate a mixed intrinsic reward to encourage exploration on both state-action space and surprising states. 
	%An implementation of the proposed CUDC is summarized in Algorithm~\ref{algorithm}.

	\subsection{The Reachability Module}
	In ExORL~\citep{yarats2022don}, existing unsupervised methods are limited by fixing the temporal distance $k=3$ between current and future states, as illustrated in Figure~\ref{workflow}~(left). To overcome this limitation and expand the feature space for improved representation learning, an intuitive approach is to employ reachability analysis for adaptive adjustment of $k$. However, existing reachability implementations are not desired due to limited binary classification of reachable states \citep{savinov2018episodic} or their reliance on costly density estimation of goal space \citep{pere2018unsupervised}. To address these issues, we propose a self-supervised reachability estimation method in CUDC, which estimates the probability of a $k$-step future state $s_{t_i+k}$ being reachable from the current state $s_{t_i}$ without requiring expensive density estimation or manual labeling.
	%In the goal-based exploration task, \cite{pere2018unsupervised} measured the reachability of a goal by estimating the distribution in the goal space. While the agent can progressively explore the unreachable (difficult) goals through sampling, the kernel density estimation is inevitably costly.
	%\cite{savinov2018episodic} proposed to reward the agent for exploring the distant states that take more environment steps to reach from the other states in memory. In particular, its reachability network is trained by a binary classification task to predict whether the distance in steps between states is smaller or larger than a fixed threshold. However, fixing this threshold throughout learning may limit the autonomy of feature space, and extensively comparing to the stored embedding in memory is quite inefficient.  
    %It should be noted the agents are being trained in a way to incorporate more dynamics information, starting from easy to more challenging prediction. 
    Consequently, our method can effectively enhance feature representation by expanding the feature space through an adaptive $k$-step. This approach has also been demonstrated to be effective in other works on reachability, such as constrained RL \citep{yu2022reachability} and robotics \citep{ivanovic2019barc}.
	
	%In this paper, we define the reachability in CUDC by agent's internal belief on whether future states are reachable from the current states, and it is trained in a self-supervised manner without the need of hyperparameter tuning. 
	
	Given a batch of unlabeled tuples $(s_{t_i}, a_{t_i}, s_{t_i+k}, k)^n_{i=1}$, existing methods in ExORL benchmark \citep{yarats2022don} simply fix the temporal distance $k=3$ throughout the data collection. In contrast, CUDC considers $k$ as a parameter and incorporates it explicitly into the tuples. We start by encoding the state features $z_{s_{t_i}}=\phi_s(s_{t_i})$, $z_{s_{{t_i}+k}}=\phi_s(s_{{t_i}+k})$, and the action feature $z_{a_{t_i}}=\phi_a(a_{t_i})$ using a state encoder $\phi_s(\cdot)$ and an action encoder $\phi_a(\cdot)$. We then perform one-hot encoding for the temporal distance $k$. To enable reachability analysis, we construct a forward dynamic network $\hat{z}_{s_{{t_i}+k}}=f_s(z_{s_{t_i}}, z_{a_{t_i}}, k; \theta_s)$ that takes as input $z_{s_{t_i}}$, $z_{a_{t_i}}$, and the encoded $k$ to predict the future state feature $\hat{z}_{s{{t_i}+k}}$, fully utilizing dynamics information. The network can be trained by minimizing the $l_2$ norm loss $||z_{s_{{t_i}+k}} -\hat{z}_{s_{{t_i}+k}}||_2$.
	
	To quantify the reachability, CUDC enforces $\hat{z}_{s_{{t_i}+k}}$ to match with its own $z_{s_{{t_i}+k}}$ as much as possible, while keeping apart from the other future states within the same batch. This contrastive intuition is that each future state should be most reachable from its own current state, and it can quantify the reachability in a simple and efficient way. Self-supervised contrastive learning has been shown to be capable of learning rich representations with more semantic latents in RL \citep{srinivas2020curl,liu2021behavior}, and CUDC follows this intuition to estimate the probability $l_i$ of $s_{{t_i}+k}$ being reachable from $s_{{t_i}}$ by:
    \begin{equation}
    \label{contrastive_loss}
    \scalebox{1}{$
    l_i=\frac{\text{sim}(\hat{z}_{s_{t_i+k}},{m}_{s_{t_i+k}})}{\text{sim}(\hat{z}_{s_{t_i+k}},{m}_{s_{t_i+k}})+\sum_{j=1, j\neq i}^{n}\text{sim}(\hat{z}_{s_{t_i+k}},{m}_{s_{t_j+k}})}$},
    %\scalebox{0.9}{$l_i=\frac{\exp(h(\hat{z}_{s_{t_i+k}})^TW \bar{h}(\bar{m}_{s_{t_i+k}}))}{\exp(h(\hat{z}_{s_{t_i+k}})^TW \bar{h}(\bar{z}_{s_{t_i+k}}))+\sum_{j=1, j\neq i}^{n} \exp(h(\hat{z}_{s_{t_i+k}})^TW \bar{h}(\bar{z}_{s_{t_j+k}}))}$}
    \end{equation}
	where $\text{sim}(a,b)=\exp(h(a)^TW \bar{h}(b))$,  $n$ is the batch size, $h(\cdot)$ is a deterministic projection function, $W$ is a hidden weight to compute the similarity between the two projections, and $\bar{h}(\cdot)$ as well as ${m}(\cdot)$ are respectively the momentum-based moving average of the projection and state feature to ensure consistency and stability \citep{he2020momentum}. The reachability network is updated by minimizing the contrastive loss function $\mathcal{L}_{\text{reach}} = -\sum_{i=1}^{n} \log l_i$ in a self-supervised manner, without manual labeling.

 To further improve representation learning, the reachability module includes two inverse models for predicting action feature $\hat{z}_{a_{t_i}}$ and temporal distance $\hat{k}$. Similar to ICM \citep{pathak2017curiosity} and Disagreement \citep{pathak2019self}, we define $\hat{z}_{a_{t_i}}=f_a(z_{s_{t_i}}, z_{s_{{t_i}+k}}, k ; \theta_a)$ with a backward loss of $||z_{a_{t_i}}-\hat{z}_{a_{t_i}}||_2$. This loss ensures that the encoded features are robust to environment variations that are uncontrollable by the agent. For the inverse model of the $k$-step, $\hat{k}=f_k(z_{s_{t_i}}, z_{s_{{t_i}+k}} ; \theta_k)$ characterizes the prediction with a distribution $\mathbb{P}(k)$. The inverse model is updated through a cross-entropy loss, which enables the encoders to capture the dynamics information in the encoded features.
	
	%Finally, the total loss function for training the proposed reachability module can be expressed as
	%\begin{equation}
	%\label{total_loss}
	%\mathcal{L}=\mathcal{L}_\text{reach} + \mathcal{L}_\text{F} + \mathcal{L}_\text{A} + %\mathcal{L}_{K}. 
	%\end{equation}
	By updating its internal belief in a self-supervised way, the agent can learn without the expensive labeling required in supervised learning. Additionally, the proposed reachability module allows the $k$-step temporal distance to adapt during learning, rather than relying on a fixed value in many existing unsupervised methods. This adaptability is important, as the feature representations of both states and actions become more informative and robust when adjusting the temporal distance of $k$-step.

	The reachability module also computes a curiosity weight $w_{i}$ for each tuple $i$ as $w_{i} = 1 - l_i \in [0,1]$, where $l_i$ is the contrastive loss defined in Equation \ref{contrastive_loss}.
    Intuitively, a large value of $w_{i}$ means that the agent does not believe the true future state is reachable from the current state, which induces high curiosity due to the conflict with current internal belief. It further indicates that this under-learned transition tuple contains novel information, and the encoders are not capable of extracting meaningful features yet. With this reachability module in place, we can seamlessly enable the agent to perform the task-agnostic dataset collection in a curious manner, which shall be illustrated in the next subsection.

	\subsection{Curiosity-Driven Learning} %shao2021reachability
	To clarify, prior works on reachability such as \citep{savinov2018episodic} only incorporate reachability as an intrinsic reward to encourage diverse exploration. In contrast, our proposed CUDC leverages reachability in multiple stages of learning to promote curiosity-driven learning coherently. Firstly, it adapts the temporal distance, i.e. $k$-step, to expand the feature space and enhance feature representation with the prediction of future states. Secondly, it incorporates a mixed intrinsic reward to encourage effective exploration in under-learned state-action space with the enhanced representation. Lastly, it regularizes the critic-actor update for the backbone DDPG algorithm by utilizing the curiosity weights to focus more on under-learned tuples. Unlike the eight existing methods evaluated in ExORL that only utilize intrinsic rewards as curiosity, our CUDC extends curiosity-driven learning to different RL components, improving task-agnostic data collection coherently.

	\subsubsection{Enhance Feature Representation with Adaptive Temporal Distances}
	It is worth noting that the eight methods evaluated in ExORL limit the autonomy of the feature space by requiring the agent to reach future states exactly three steps away, i.e., $(s_{t_i}, a_{t_i}, s_{t_i+3})^n_{i=1}$. Recent online pre-training RL methods, such as SPR \citep{schwarzer2020data} and SGI \citep{schwarzer2021pretraining}, predict the agent's own latent state representations multiple steps into the future, improving sample efficiency. However, these methods require iterative predictions by calling the forward dynamic network $k$ times. In contrast, our proposed CUDC enables automatic adjustment of the temporal distance $k$ and performs $k$-step future state estimation directly, without substantially increasing computational complexity. The key idea is to keep the reachability estimation increasingly challenging with an adaptive $k$-step, thereby expanding the feature space to learn more meaningful reachability information. %This approach goes beyond the three-step limitation in prior works, providing more flexibility for the agent to explore and learn.
	
	In our approach, we dynamically adjust $k$ to impose more challenging reachability predictions, by leveraging the agent's level of curiosity. Specifically, we increase $k$ by 1 if the agent's curiosity level is low in the current reachability analysis, and we define a threshold $C_w$ for low curiosity and a threshold $C_k$ for the proportion of tuples with low curiosity. Thus, the agent adapts $k$ when the average value of $w_i$ is below $C_w$ for more than $C_k$ of the tuples in the batch, as represented by:
    \begin{equation}
    \scalebox{1}{$
        \frac{1}{n} \sum_i^n \mathds{1}_{\mathrm{w_i < C_w}} > C_k.$}
    \end{equation}
	%where the number of tuples with low curiosity, i.e. $w_i < C_w$, is greater than a threshold $C_k$.
	%\begin{equation}
	%k \text{ is increased by 1 if } \sum_i^n \1_\mathrm{w_i > C_w} > C_k
	%\end{equation}
    The rationale behind this approach is that when the agent can estimate the current $k$-step reachability well for the majority of tuples in the batch, it should be encouraged to explore further. By expanding the feature space to learn the dynamics of more distant future states, the feature representation can be enhanced, leading to more informative and diverse task-agnostic data collection.
    It is worth noting that there are other possible ways to vary the $k$-step, such as by sampling from a probabilistic distribution. To validate the effectiveness of our proposed curiosity-driven method compared to other sampling-based methods, we conduct an ablation study in Section~\ref{experiments}.
	
	%Subsequently, the data collection process is improved by updating agent's internal belief at the same time.

	\subsubsection{Incorporate a Mixed Intrinsic Reward}
	CUDC utilizes a mixed intrinsic reward that combines state-action entropy and prediction error of future states. While previous methods like APT \citep{liu2021behavior} and RE3 \citep{seo2021state} have demonstrated that particle-based k-nearest neighbors state entropy can encourage agents to explore the state space more uniformly, we believe that exploration should not be limited to the state space alone, but should also extend to the action space. To achieve this, CUDC expands state embedding to state-action embedding and shows that entropy maximization can be applied to the k-nearest neighbor entropy estimation in the state-action representation space in Lemma \ref{lemma:entropy}. This approach encourages the agent to explore both the state and action spaces more diversely, leading to more effective and informative data collection.
	\begin{lemma}
		\label{lemma:entropy}
		Let $u=(z_s, z_a)$ represent the state-action representation. The particle-based entropy $\mathcal{H}(u)$ is proportional to a K-nearest neighbor (K-NN) distance,
		\begin{equation*}
		\mathcal{H}(u) \propto \sum_{i=1}^{n} \log||u_i - u_i^{\text{K-NN}}||_2.
		\end{equation*}
	\end{lemma}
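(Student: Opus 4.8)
The plan is to recognize $\mathcal{H}(u)$ as the differential (Shannon) entropy of the state-action representation $u=(z_s,z_a)\in\mathbb{R}^d$ and to estimate it nonparametrically via the Kozachenko–Leonenko $K$-nearest-neighbor estimator, exactly as APT and RE3 do in the state-only case. Writing $\mathcal{H}(u)=-\int p(u)\log p(u)\,du=-\mathbb{E}_{u\sim p}[\log p(u)]$, the first step is to replace the intractable expectation by its empirical average over the $n$ sampled particles $\{u_i\}_{i=1}^n$, giving $\mathcal{H}(u)\approx-\frac{1}{n}\sum_{i=1}^n\log p(u_i)$. The work then reduces to producing a tractable local estimate of the density $p(u_i)$ that depends only on interpoint distances.

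For that local estimate I would use the standard $K$-NN argument. Let $R_i=\|u_i-u_i^{\text{K-NN}}\|_2$ be the distance from $u_i$ to its $K$-th nearest neighbor among the particles. For small $R_i$ (i.e. large $n$ and locally smooth $p$), the probability mass inside the ball $B(u_i,R_i)$ is on one hand approximately $p(u_i)\cdot V_d(R_i)$, where $V_d(R_i)=c_d R_i^d$ and $c_d=\pi^{d/2}/\Gamma(d/2+1)$ is the volume of the unit $d$-ball, and on the other hand is approximately $K/n$, since by construction the ball captures $K$ of the $n$ particles. Equating the two estimates yields $p(u_i)\approx K/(n\,c_d R_i^d)$.

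The last step is to substitute this density estimate back into the empirical entropy, giving $\mathcal{H}(u)\approx-\frac{1}{n}\sum_{i=1}^n\bigl[\log\tfrac{K}{n c_d}-d\log R_i\bigr]$, in which the term $\log\tfrac{K}{n c_d}$ is independent of the particle configuration. Collecting the constants, the only data-dependent contribution is $\tfrac{d}{n}\sum_{i=1}^n\log R_i$, so $\mathcal{H}(u)\propto\sum_{i=1}^n\log\|u_i-u_i^{\text{K-NN}}\|_2$, as claimed. (A fully rigorous Kozachenko–Leonenko derivation replaces $\log K$ by the digamma correction $\psi(K)$ to debias the estimator, but this only alters the additive constant.)

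The main obstacle is interpreting ``$\propto$'' correctly: the derivation produces an affine relationship $\mathcal{H}(u)=\text{const}+\tfrac{d}{n}\sum_i\log R_i$ rather than a pure proportionality, so the statement should be read as ``equal up to a positive affine transformation.'' This is harmless for the intended use---maximizing the intrinsic reward---because the additive and multiplicative constants affect neither the argmax nor the gradient direction with respect to the representation. The remaining care is in the density approximation itself, which requires $p$ to be locally smooth and the $K$-NN radius to be small; these are the usual consistency conditions under which the estimator is asymptotically unbiased, and I would invoke them rather than re-prove convergence.
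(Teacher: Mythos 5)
Your proposal is correct and follows essentially the same route as the paper: the paper likewise invokes the particle-based (Kozachenko--Leonenko-style) $K$-NN entropy estimator $\mathcal{H}(z) = -\frac{1}{n}\sum_{i}\log\frac{K}{n v_i^K}+b(K)$ from APT, substitutes the hypersphere-volume formula to pass from $v_i^K$ to $\log\|z_i - z_i^{\text{K-NN}}\|_2$, and then replaces $z$ by the concatenated representation $u=(z_s,z_a)$, exactly as you do. The only differences are cosmetic: you re-derive the estimator from the ball-mass argument rather than citing it, and you are more careful than the paper in noting that ``$\propto$'' really means equality up to a positive affine transformation (and in keeping the dimension exponent in the volume formula, which the paper's stated $v_i^K$ drops).
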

 \begin{proof}
 A proof is provided in Appendix \ref{sec:proof}.
\end{proof}
	We build on the idea of treating each tuple as a particle \citep{liu2021behavior, seo2021state} and propose an intrinsic reward to estimate particle-based entropy, defined as  $r_{\mathcal{H}}(s_{t_i},a_{t_i})= \log (\frac{1}{N_K}\sum ||u_i - u_i^{\text{K-NN}}||_2+1)$, where $u_i=(\phi_s(s_{t_i}), (\phi_a(a_{t_i}))$, $N_K$  is the number of K-NN, and $\phi_s$ and $\phi_a$ are state and action encoders respectively. Since the encoded features are constantly updated to capture the dynamics of more distant future states in the reachability module, the proposed $r_{\mathcal{H}}$ promotes diverse state-action space exploration. This is consistent with the entropy maximization principle \citep{singh2003nearest} and has been shown to be effective in the state space using the state-of-the-art off-policy RL algorithm SAC \citep{haarnoja2018soft}.
    
	Additionally, we integrate prediction error of future states as another component of the intrinsic reward to incentivize the agent to explore surprising states beyond its expectations \citep{pathak2017curiosity, burda2018large}. Specifically, we use $r_{{\mathcal{E}}}(s_{t_i},a_{t_i})=||z_{s_{t_i+k}} -\hat{z}_{s_{t_i+k}}||_2$, where the reachability module is conveniently re-used without additional networks. Finally, the mixed intrinsic reward in CUDC is given by
	\begin{equation}
	\label{intrinsic_reward}
	r_i(s_{t_i},a_{t_i})=r_{\mathcal{H}}(s_{t_i},a_{t_i}) + \alpha r_{\mathcal{E}}(s_{t_i},a_{t_i}) + \beta,
	\end{equation}
	where $\alpha$ prioritizes under-learned state exploration and $\beta$ is a constant for numerical stability.

          \begin{algorithm}[t]
		\caption{Implementation of the proposed CUDC}
		\label{algorithm}
            \small
		\textbf{Initialize} parameters of encoders $\phi_s$ and $\phi_a$, forward dynamic $f_s$, inverse models $f_a$ and $f_k$, projection~$h$, critic $Q$, policy $\pi$, hidden weight $W$,  temporal distance $k$, batch size $n$, and an empty dataset~$\mathcal{D}=\emptyset$
  
		\begin{algorithmic}[0] %[1] enables line numbers
			\FOR{each time step $t$}
			\STATE // COLLECT TRANSITIONS
			\STATE Interact with the environment using the policy $a_t \sim \pi(s_t)$ and observe $s_{t+1}$
			\STATE $\mathcal{D} \cup (s_t, a_t, s_{t+1}) \to \mathcal{D}$ 
			\STATE // UPDATE INTERNAL BELIEF
			\STATE Sample a minibatch $\{(s_{t_i}, a_{t_i}, s_{t_i+k}, k)\}_{i=1}^{n}\sim \mathcal{D}$
			\FOR{each tuple $i$ in the minibatch} 
			\STATE Encode the state and action, and predict the $t_{i}+k$'s future state feature $\hat{z}_{s_{t_i+k}}$
			\STATE Evaluate the curiosity weight $w_{i} =  1 - l_i$ by Eq.~(\ref{contrastive_loss})
			\STATE Compute the intrinsic reward $r_i$ using Eq.~(\ref{intrinsic_reward})
			\ENDFOR
			\STATE Update the internal belief of the reachability module
			\STATE //ADAPT THE K-STEP TO PREDICT
			% \IF{$\frac{1}{n}\sum_i^n \1_\mathrm{w_i < C_w} > C_k$}
			\IF{$\frac{1}{n}\sum_i^n \mathds{1}_\mathrm{w_i < C_w} > C_k$}
			
			\STATE Increase the temporal distance by $k=k+1$
			\ENDIF
			\STATE //REGULARIZE CRITIC-ACTOR UPDATE
			\STATE Update the critic $Q$ with regularization by Eq.~(\ref{Q_learning})
			\STATE Update the actor $\pi$ with regularization
			\STATE Perform the momentum update for $\bar{h}$ and ${m}$
			\ENDFOR
		\end{algorithmic}
	\end{algorithm}
 \normalsize	
	\subsubsection{Regularize the critic-actor update}
Furthermore, CUDC utilizes the curiosity weight $w_i$ to adaptively regularize the backbone DDPG algorithm, allowing it to focus more on under-learned tuples. The weight $w=(w_1, w_2, \cdots, w_n)$ quantitatively characterizes the curiosity weight of each transition tuple, which can be used to determine sample importance and regularize both critic and actor updates. Therefore, the Q-learning in DDPG can be performed by minimizing the following objective,
	\begin{equation}
	\label{Q_learning}
        %_{(s_t,a_t,s_{t+k})\sim \mathcal{D}} 
        \scalebox{0.85}{$\mathbb{E}_{\cdot \sim \mathcal{D}}\left[ w \left(Q(s_t, a_t)-(r_i(s_t,a_t)+\gamma Q_{\text{target}}(s_{t+k}, \pi(s_{t+k}))) \right)^2\right]$}.
	\end{equation}
	Meanwhile, the policy can be updated by maximizing 
	$
	\mathbb{E}_{\cdot \sim \mathcal{D}} \left[ w Q(s_t,\pi (s_t) )\right]
	$. In this way, CUDC enables the agent to adapt its learning process in a self-supervised manner by using the conceptualized curiosity to exploit sample importance.

     \begin{figure*}[t]
    \centering
    \vspace{-1mm}
    \includegraphics[width=1\textwidth]{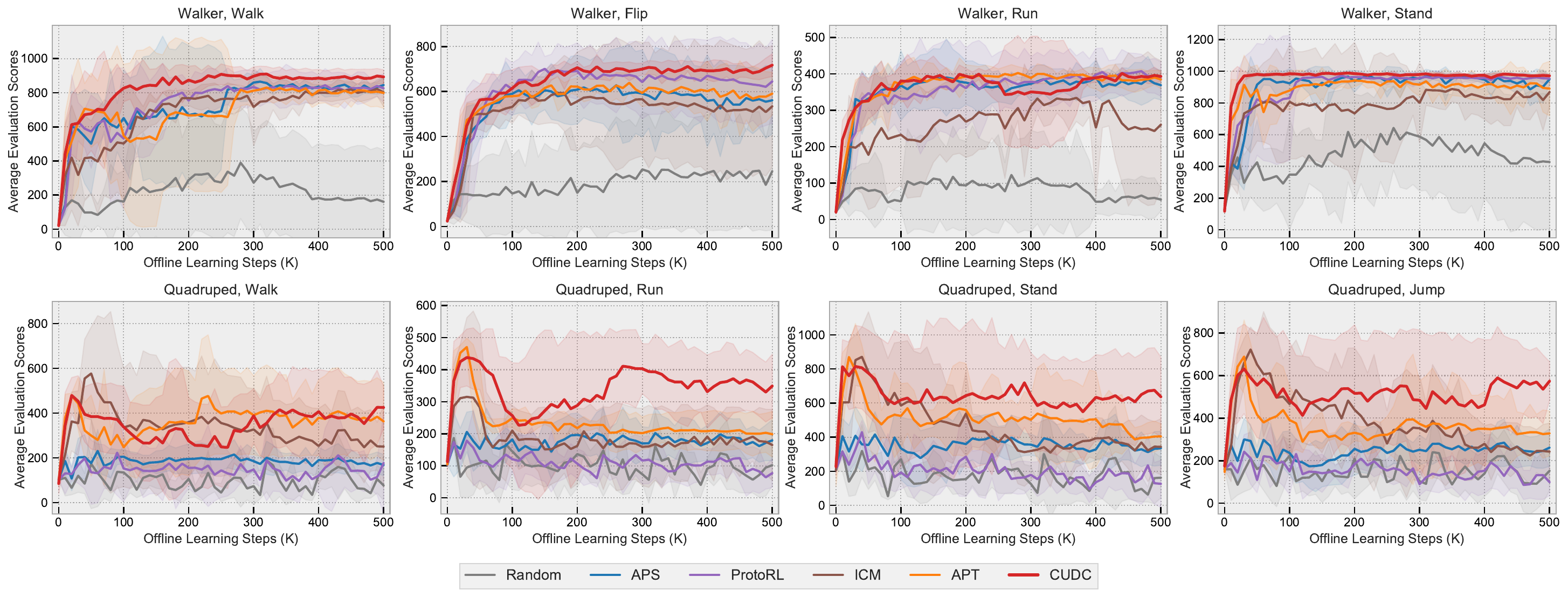}
    \vspace{-6mm}
    \caption{\small Learning curves of the offline RL agent on the task-agnostic dataset collected by different methods. The proposed CUDC demonstrates the superior capability of improving the computational efficiency and learning performances of the offline RL agent.}
    \label{fig:mian_two_curves}
    \vskip -0.12in
    \end{figure*}
    \normalsize

	\section{Experiments}
 \label{experiments}
 
	\paragraph{Environments}
	We evaluated on a set of challenging continuous control tasks with state observations, drawn from the DeepMind control suite \citep{tassa2018deepmind}. The suite contains 12 downstream tasks, organized into three main domains: Walker, Quadruped, and Jaco Arm. Walker is a controllable entity with locomotion-related balancing controls, where it can learn to walk, run, flip, and stand. Quadruped is a passively stable body in a more challenging 3D environment, which requires learning various locomotion skills such as walking, running, standing, and jumping. Jaco Arm is a six-degree-of-freedom robotic arm with a three-finger gripper for object manipulation, where the downstream tasks require it to reach different positions. Note that the PointMass Maze task is not included, as most baseline methods in ExORL have already demonstrated excellent performances on it.

	\paragraph{Baseline Models}
	We compare CUDC against state-of-the-art unsupervised methods across all three categories as benchmarked in ExORL, i.e., a knowledge-driven baseline of ICM \citep{pathak2017curiosity}, data-driven baselines of APT \citep{liu2021behavior} and ProtoRL \citep{yarats2021reinforcement}, and a competence-driven baseline of APS \citep{liu2021aps}. Meanwhile, a random data collector is also included, which collects the data by performing randomly sampled actions. The other four methods discussed in ExORL are excluded since their performance are less competitive.
	%Explore2Offline \citep{lambert2022challenges} is a concurrent work for data collection, but its source code is not available at the moment.
	We use the same hyperparameters and model architecture as reported in ExORL to ensure a fair comparison. To demonstrate that all proposed components play important roles in the performance, we also compare four versions of CUDC. $\text{CUDC}_\text{vary}^\text{ICM}$ and $\text{CUDC}_\text{vary}^\text{APT}$: adapting the temporal distance of $k$-step by the intrinsic rewards based on the original ICM and APT methods. $\text{CUDC}_\text{reward}$: extending to state-action entropy with a mixed intrinsic reward based on $\text{CUDC}_\text{vary}^\text{APT}$. $\text{CUDC}_\text{reach}$: adding the full reachability module without regularization based on $\text{CUDC}_\text{reward}$. The detailed implementation and differences from the full model are summarized in Appendix~\ref{appendix:hyper}.
	
	\paragraph{Model Training and Evaluation}
	To ensure model stability during learning, we have restricted the temporal distance $k$ to be increased from 3 to 6 and have set upper and lower bounds for the regularization weights to guarantee stability.
	For further details regarding the network implementation and hyperparameter setting of the proposed CUDC, readers can refer to Appendix~\ref{appendix:hyper}. During data collection, all methods have been trained using a DDPG \citep{lillicrap2015continuous} agent as the backbone to ensure fairness.  They have interacted with 3 domain environments in the absence of extrinsic rewards for 1M steps. For the main results, a total of 90 datasets (6 algorithms $\times$ 3 main tasks $\times$ 5 seeds) have been collected. Afterwards, relabeling has been performed for each downstream task. During the evaluation, a TD3 \citep{fujimoto2018addressing} agent learns offline from each relabeled dataset for 500K steps. We report the performance score at 100K steps for computational efficiency and at 500K steps for learning performance.

 \paragraph{Main Results on 12 Downstream Tasks}
	Figure~\ref{fig:mian_two_curves} indicates that ProtoRL performs well in the Walker domain but fails in the Quadruped domain. Similarly, all the other baseline methods cannot collect consistent high-quality datasets for all domains. In contrast, the dataset collected by CUDC demonstrates a higher quality with an expanded feature space, as the offline agent's learning performances at 500K steps are enhanced in all 12 downstream tasks across the 3 challenging domains, as highlighted in Table \ref{table:main} of Appendix \ref{sec:full_main}. Specifically, CUDC outperforms the competence-based method (APS) in the Walker domain by 6\%, outperforms the data-based method (APT) in the Quadruped domain by 51\%, and outperforms the knowledge-based method (ICM) in the Jaco Arm domain by 10\%. In terms of efficiency, Figure~\ref{fig:mian_two_curves} shows significant improvements of CUDC on 3 downstream tasks of the Quadruped domain, indicating improved computational efficiency. In the easiest domain of Walker, CUDC helps the offline agent to converge faster in 3 downstream tasks. However, the computational efficiency in the Jaco Arm domain is unsatisfactory. This could be due to too much complexity in this most challenging environment, increasing the difficulty of reachability analysis. A visualization of the quality for the collected datasets is provided in Appendix \ref{sec:full_main}, where our proposed method has collected higher-quality dataset with increasingly more rewarding states being visited. For the sample efficiency, the offline RL agent can perform well with significantly less data collected by the proposed CUDC as discussed in Appendix~\ref{sec:sample_efficiency}. Additional results are presented in Appendix~\ref{sec:full_main} and consistent results are obtained by evaluating with another offline RL algorithm of CQL \citep{kumar2020conservative} in Appendix~\ref{sec:cql_results}.

	\paragraph{Effects of Adapting the $k$-Step} We empirically show that adapting the temporal distance to explore more distant future states can enhance the feature representation, and thereby improve the data collection process. By comparing the results in Figure~\ref{fig:variants},  $\text{CUDC}_\text{vary}^\text{ICM}$ has outperformed ICM significantly, with on average 1.25 $\times$  computational efficiency at 100K step and 1.16 $\times$ offline learning performance at 500K step. Similarly, $\text{CUDC}_\text{vary}^\text{APT}$ obtains respectively 1.12 $\times$ and 1.04 $\times$ scores at 100K and 500K steps across 4 downstream tasks, compared with APT. Note that the standard deviation increases slightly, which may be due to the introduced complexity of considering more distant future states in improving the learned representation.  Thus, it is important to find an adaptive way to smooth this process, such as by incorporating the other proposed components coherently.
	
		\begin{figure*}[t]
		\centering
		\vspace{-1mm}
		\includegraphics[width=1\textwidth]{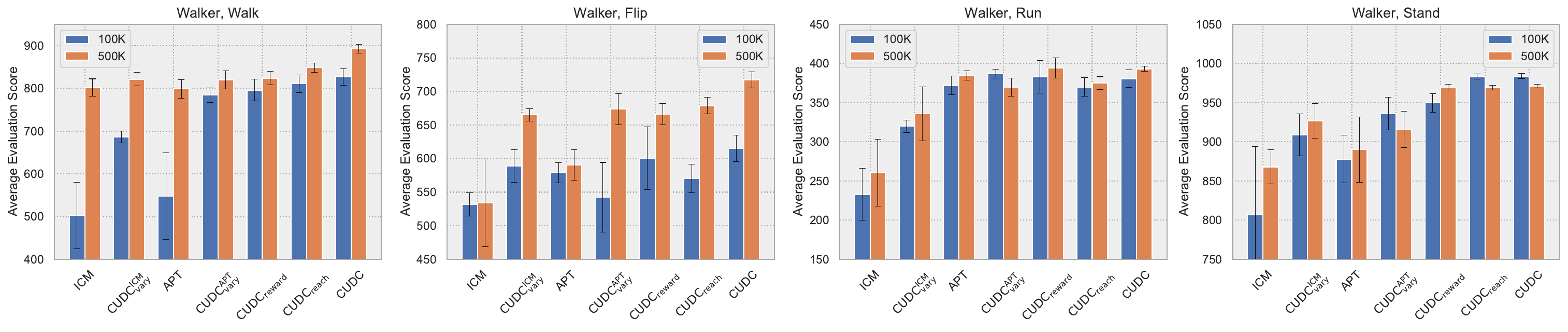} 
		\vspace{-7mm}
		\caption{\small The performance score evaluated at 100K and 500K steps in 4 downstream tasks of Walker. All four versions of CUDC perform better than ICM and APT.}
		\label{fig:variants}
	\end{figure*}
 	    \begin{figure*}[t]
		\centering
            \vspace{-2mm}
		\includegraphics[width=0.95\textwidth]{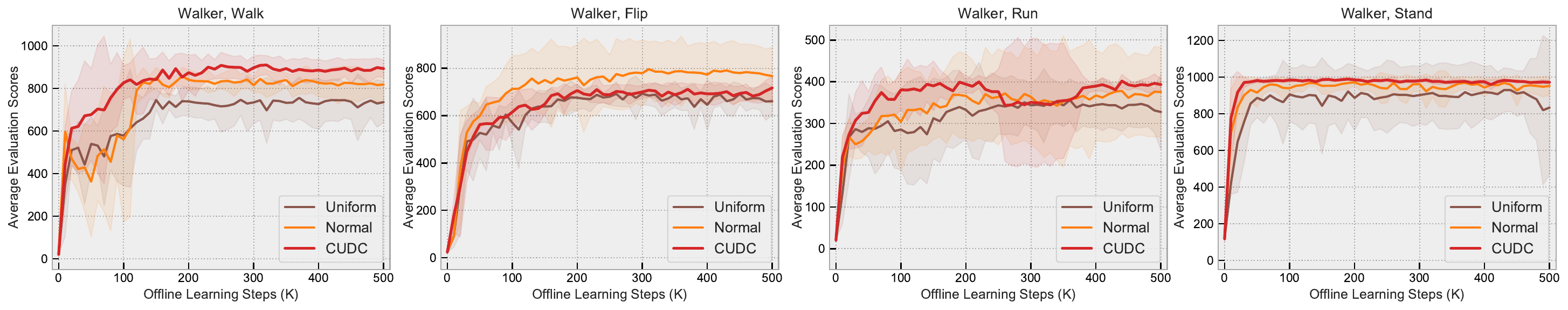} 
		\vspace{-3mm}
		\caption{\small Learning curves of the offline RL agent on 4 downstream tasks of Walker. The $k$-step adaptation proposed in CUDC outperforms the other two sampling methods in 3 out of 4 downstream tasks.}
		\vspace{-3mm}
		\label{fig:ablation_main_curve)}
	\end{figure*}
	\normalsize

    \paragraph{Effectiveness of the Other Proposed Components} We additionally integrated mixed intrinsic reward into $\text{CUDC}_\text{vary}^\text{APT}$ as $\text{CUDC}_\text{reward}$, resulting in further improvements in learning efficiency at 100K steps by 3.3\% and capability at 500K steps by 3.0\% for the offline RL agent, as presented in Figure~\ref{fig:variants}. However, due to the mixed intrinsic reward's nature of promoting uniform exploration in the state-action space and focusing on under-learned states, the performance at 100K steps became unstable with a 67\% increase in standard deviation. Thus, we leveraged the proposed reachability module to function as the agent's internal belief and facilitate the data collection process. Comparing $\text{CUDC}_\text{reach}$ and $\text{CUDC}_\text{reward}$, the dataset collected by $\text{CUDC}_\text{reach}$ reduced standard deviation by 48\% and 25\% at 100K and 500K steps, respectively, stabilizing offline learning. However, its performance scores decreased slightly in two tasks. The full model, compared to $\text{CUDC}_\text{reach}$, further regularizes the critic-actor update with the curiosity weight to focus more on under-learned tuples, resulting in a 3.2\% and 4.0\% improvement in learning efficiency and capability, respectively, with the minimum standard deviation at 500K steps. To further investigate the effectiveness, we carry out more experiments by respectively removing each proposed component from the full model in Appendix~\ref{sec:ablation_remove}.
	It can be concluded that varying the temporal distance is the most crucial factor in collecting a useful dataset with an expanded feature space, while the other components work coherently to yield further improvement. %by diverse exploration and stable update.
 
 %all components play important roles to collect a useful dataset for improving the downstream task learning and resolving instability on offline RL agents. %The full CUDC navigates all four RL components together to improve the sample efficiency and resolve instability, which demonstrates effective collaboration. 

	%\subsection{Ablation Studies}
	\paragraph{Adjusting the $k$-Step in Different Ways} One may be curious about how adjusting the temporal distance $k$ in the reachability module affects the feature representation. To investigate this, we conducted an ablation study in the Walker domain by sampling $k$ uniformly (Uniform) from 3 to 6 and normally (Normal) with an increasing mean. The results in Figure~\ref{fig:ablation_main_curve)} show that Uniform performs the worst in all 4 tasks as it cannot adapt the temporal distance in a way that enhances representation learning. At 500K steps, it only achieves 85\% overall learning capability with a 300\% increase in standard deviation, compared to CUDC. Normal to some extent adapts $k$ through an increasing mean, and it even outperforms CUDC in the Flip task. However, its overall performance is still 4.5\% weaker than CUDC, and its standard deviation is 128\% higher than CUDC, indicating an instability issue. Overall, the curious adaptation method proposed in CUDC is the best, and there is potential to investigate more adaptive ways in the future.

	%\paragraph{Offline RL Performance on Different Dataset Sizes}
   \paragraph{Limitations and Broader Impacts}
   Despite demonstrating strong empirical performance, CUDC is not without its limitations. Like other unsupervised methods, its scalability to complex environments may be limited, and in safety-critical applications where expert data is crucial, relying solely on unsupervised approaches can pose risks. From an ethical standpoint, the application of CUDC in real-world scenarios, such as robotics, AI video games, or social media platforms, raises concerns. The process of diverse data collection without proper supervision or restrictions can give rise to potential safety and privacy issues. It is important to address these ethical considerations to ensure the responsible and safe implementation of the CUDC method in practical applications.
   
	\section{Conclusion}
	We propose CUDC, a curiosity-driven unsupervised data collection method for multi-task offline RL. It dynamically expands the feature space to improve dataset quality. CUDC includes a reachability module that estimates the probability of a $k$-step future state being reachable from the current state. By adaptively allowing the agent to explore more distant future states, CUDC can enhance feature representation. Empirically, our method outperforms existing unsupervised benchmarks in terms of computational efficiency, sample efficiency, and learning performance. Our work provides valuable insights into effective data collection methods for future research.
	% Besides, it will be interesting to investigate the data-collection with high-dimensional observations.

%\fontsize{9.8pt}{10.8pt}\selectfont
\section{Acknowledgments}
This research is supported by Alibaba Group and Alibaba-NTU Singapore Joint Research Institute (JRI), Nanyang Technological University, Singapore. H.Qian thanks the support from CFAR.

%\fontsize{9.8pt}{10.8pt} \selectfont
%\small
\bibliography{aaai24}
\normalsize	
\newpage
\appendix
\setcounter{figure}{0}
\onecolumn
\section{Implementation and Hyperparameter Settings}
\label{appendix:hyper}
\subsection{Compute and Overall Settings}
We conduct experiments on four cloud severs and one physical server with the following configurations.
\begin{itemize}
	\item Operation System: Ubuntu 18.04
	\item Memory: 32GiB / 32GiB / 32GiB / 32GiB / 128GiB
	\item CPU: Intel Core Processor (Skylake) / Intel Core Processor (Skylake) / Intel Core Processor (Skylake) / Intel Core Processor (Skylake) / Intel(R) Xeon(R) CPU E5-2620 v2 @ 2.10GHz
	\item vCPU: 8 / 8 / 16 / 16 / 24
	\item GPU: 2 $\times$ NVIDIA Tesla P100 16GB / 2 $\times$ NVIDIA Tesla P100 16GB / 1 $\times$ NVIDIA Tesla V100S PCIE 32GB / 1 $\times$ NVIDIA Tesla V100S PCIE 32GB / 2 $\times$ NVIDIA GeForce RTX 3090 24GB
\end{itemize}
Our proposed CUDC is implemented using PyTorch \cite{paszke2019pytorch} based on ExORL benchmark \cite{yarats2022don}. \footnote{The ExORL benchmark is available at \url{https://github.com/denisyarats/exorl}.} The majority of ExORL is licensed under the MIT license, while portions of the project (DeepMind \cite{tassa2018deepmind}) is licensed under the Apache 2.0 license.
It approximately takes 8 hours to collect a 1M sized dataset while 4 hours to perform one downstream task offline learning. %For the offline RL agents, we respectively utilize the implementations of TD3 and CQL as provided in ExORL. 
The implementation details are as follows.
\subsection{Implementation Details}
\subsubsection{CUDC}
	In the reachability module of CUDC shown in Figure~1~(right), the state encoder $\phi_s$ is a 1-layer MLP with the ReLU activation. Subsequently, the output is passed to a single normalized fully-connected layer by LayerNorm \citep{lei2016layer} with the tanh nonlinearity applied at the end. The target encoder is momentum updated from the state encoder to obtain $\bar{z}$. The action encoder $\phi_a$ is a 3-layer MLP with the ReLU activation. For the forward network $f_s$ and the backward networks $f_a$ and $f_k$, they are 2-layer MLP with the ReLU activation. The projection network $h$ is a 2-layer MLP with hidden size of 128 and output size of 64, followed by LayerNorm. $\bar{h}$ is also momentum updated from $h$.
	
	\subsubsection{$\text{CUDC}_\text{vary}^\text{ICM}$ and $\text{CUDC}_\text{vary}^\text{APT}$}
	$\text{CUDC}_\text{vary}^\text{ICM}$ and $\text{CUDC}_\text{vary}^\text{APT}$ are two variants of the proposed CUDC. Given the implementation of ICM and APT in ExORL \citep{yarats2022don}, we added the forward network $f_s$ and backward network $f_k$ in order to let these models learn more feature representations with the dynamics information. Therefore, the temporal distance of $k$-step can be adapted in a similar way as in CUDC. $k$ is increased by 1 if the proportion of tuples with low intrinsic rewards is greater than a threshold, i.e.  $\frac{\sum_i^n \mathds{1}_\mathrm{r_i < C_r}}{n} > C_k$. In this way, we can investigate the effects of adapting environment step on two unsupervised baseline methods.
	
	\subsubsection{ $\text{CUDC}_\text{reward}$}
	$\text{CUDC}_\text{reward}$ is a variant of the proposed CUDC. Given the implementation of $\text{CUDC}_\text{vary}^\text{APT}$ in the previous subsection, we extended the state entropy to state-action entropy and meanwhile add a prediction error of the $k$-step future state as formulated in Equation 3. Therefore, agent can be encouraged to explore more diverse state-action space while focusing on the under states with high prediction errors.  In this way, we can investigate the effects of the proposed mixed intrinsic reward.
	
	\subsubsection{ $\text{CUDC}_\text{reach}$}
	$\text{CUDC}_\text{reach}$ is a variant of the proposed CUDC. Given the implementation of $\text{CUDC}_\text{reward}$ in the previous subsection, we incorporated the full reachability module with the reachable network. As a result, the adaptive update of the environment step can be facilitated by the curiosity weight $w$ outputted by the reachability module and the enhanced representation learning can be carried out in a self-supervised manner. Compared with the full CUDC model, $\text{CUDC}_\text{reach}$ has disabled the regularization of the critic-actor update. In this way, we can investigate the effects of the proposed reachability module.

	\subsection{Hyperparameter Setting}
	
	\subsubsection{Data Collection}
	We provide a full set of common hyperparameters used in baselines as well as CUDC in Table~\ref{tab:common_hyper}, which closely follows the same settings from ExORL \citep{yarats2022don} and URLB \citep{laskin2021urlb}.
	
	\begin{table}[h!]
		\centering
		\caption{Common hyperparameter setting for the unsupervised data collection methods}
            \vskip 0.1in
		\label{tab:common_hyper}
		\begin{tabular}{lr}
			\toprule
			Hyperparameter           & Value     \\* \midrule
			Observation type         & states    \\
			Replay buffer Size       & $10^6$    \\
			Action repetitions       & 1         \\
			Seed frames              & 4000      \\
			Batch size               & 1024      \\
			Discount factor                & 0.99      \\
			Optimizer                & Adam      \\
			Learning rate            & $10^{-4}$ \\
			Non-linearity            & ReLU      \\
			Agent update frequency   & 2         \\
			Critic target EMA rate   & 0.01      \\
			Hidden dimension         & 1024      \\
			Exploration stddev clip  & 0.3       \\
			Exploration stddev value & 0.2       \\* \bottomrule
		\end{tabular}
	\end{table}
	
	For the other hyperparameter used in CUDC, they are listed in Table~\ref{tab:CUDC_hyper}. The random seeds are consistent with ExORL.
	
	\begin{table}[h!]
		\centering
		\caption{Hyperparameter setting for the proposed CUDC}
  \vskip 0.1in
		\label{tab:CUDC_hyper}
		\begin{tabular}{ll}
			\toprule
			Hyperparameter                               & Value                                 \\ \midrule
			$k$-step range                       & 3, 4, 5, 6                         \\
			State representation dimension               & 512                                \\
			Actor representation dimension               & 64                                 \\
			MLP hidden dimension for action encoder      & 64 for action encoder              \\
			MLP hidden dimension                         & 128 for projection                 \\
			Projection dimension                         & 64                                 \\
			Regularization clip                          & {[}0.2, 2{]} for Walker            \\
			& {[}0.2, 1{]} for Quadruped         \\
			& {[}0.9, 1{]} for Jaco Arm          \\
			Intrinsic reward weights $(\alpha,   \beta)$ & $(10^{-3}, 10^{-2})$ for Walker    \\
			& $(10^{-4}, 10^{-2})$ for Quadruped \\
			& $(100, 0)$ for Jaco Arm            \\
			K-NN                                         & 12                                 \\
			Threshold $C_w$                              & 0.02 for Walker and Quadruped      \\
			& 0.01 for Jaco Arm                  \\
			Threshold $C_k$   & 0.5  \\  \bottomrule
		\end{tabular}
	\end{table}
	
	It should be noted that Explore2Offline \citep{lambert2022challenges} is a concurrent work for data collection, but its source code is not available at the moment.
	
	\subsubsection{Offline RL}
	For the offline RL agent, we follow the findings reported in ExORL that even the vanilla off-policy RL algorithm of TD3 \citep{fujimoto2018addressing} can outperform the carefully designed offline RL algorithms when the collected dataset is of high quality. Thus, we implement an offline RL of TD3 to evaluate the quality of the collected dataset. In addtion, another offline RL algorithm of CQL \citep{kumar2020conservative} is also included to further verify the higher-quality of the dataset collected by CUDC. We follow the same setting with default random seeds as ExORL to carry out the experiments. The detailed implementation and hyperparamter settings for both TD3 and CQL can be found in ExORL.
 \\
 \newpage
  \clearpage
 \newpage
	
	\section{Proof of Lemma 3.1}
 \label{sec:proof}
	\begin{proof}
		It has been shown in APT \citep{liu2021behavior} that the particle-based entropy estimator of the state representation $z$ can be derived as
		\begin{equation}
		\mathcal{H}(z) = -\frac{1}{n}\sum_{i=1}^n \log \frac{K}{n v_i^K} +b(K) \propto  \sum_{i=1}^{n} \log v_i^K
		\end{equation}
		where $b(K)$ represents a bias correlation and $v_i^K$ indicates the volume of the hypersphere with a radius of $||z_i-z_i^{\text{K-NN}}||$ between the $z_i$ and its $K$-th nearest neighbor $z_i^{\text{K-NN}}$.
		
		By substituting $v_i^K=\frac{||z_i-z_i^{\text{K-NN}}|| \pi^{n_z/2}}{\Gamma(n_z/2+1)}$ where $\Gamma$ is a gamma function and $n_z$ is the dimension of $z$, we can obtain
		\begin{equation}
		\label{entorpy_z}
		\mathcal{H}(z) \propto  \sum_{i=1}^{n} \log||z_i - z_i^{\text{K-NN}}||_2.
		\end{equation}
		Let $u=(z_s, z_a)$ represent the state-action representation and we can further substitute $z=u$ into Equation (\ref{entorpy_z}) to obtain
		\begin{equation}
		\mathcal{H}(u) \propto  \sum_{i=1}^{n} \log||u_i - u_i^{\text{K-NN}}||_2.
		\end{equation}

	\end{proof}

	\newpage
	\section{Additional Results and Discussions}
	\subsection{Full Results of Main Experiments}
	\label{sec:full_main}
	Table \label{table:main}, Figure \ref{fig:main_bar_full} and Figure \ref{fig:main_curve_full)} show the full results on the 12 downstream tasks across 3 domains. Figure~\ref{fig:overall_main_bar} summarizes the overall performances in 3 domains. They demonstrate that our proposed methods significantly outperform the baseline methods in 3 downstream tasks of Walker and 3 downstream tasks of Quadruped. However, in the hardest domain of Jaco Arm, the computational efficiency becomes poor in 3 downstream tasks although its learning performance at 500K step catches up with the best baseline method of ICM. We explain this by the fact that our method can introduce more complexity and challenges for the agent whenever adapting the temporal distance $k$ to learn a better representation. However, as Jaco Arm is indeed the most challenging domain, the introduced complexity cannot be fully coped with in this environment. Thus, the poor performance in computational efficiency can occur, and it is interesting to further study on how to cope with hard domain environments in future works.
	    \begin{table*}[h]
    \vskip -0.15in
		\caption{Main results of the offline RL agent on 12 downstream tasks across 3 main domains. The proposed CUDC collects a more useful dataset such that an offline RL agent can improve computational efficiency (100K) in 9 out of 12 downstream tasks and achieve better learning performance (500K) in all 12 downstream tasks. } 
		\label{table:main} 
		\begin{center}
            \resizebox{0.9\columnwidth}{!}{%
            \begin{tabular}{lllllll}
			\toprule
			100K Step Score              & Random      & APS         & ProtoRL              & ICM         & APT         & CUDC                \\* \midrule
			Walker, Walk                & 190$\pm$153 & 652$\pm$227         & 532$\pm$332 & 503$\pm$258         & 548$\pm$338 & \textbf{827$\pm$64}  \\
			Walker, Flip                & 175$\pm$136 & 590$\pm$77          & 610$\pm$95  & 530$\pm$59          & 579$\pm$51  & \textbf{615$\pm$66}  \\
			Walker, Run                 & 53$\pm$25   & 368$\pm$77          & 332$\pm$81  & 233$\pm$111         & 372$\pm$40  & \textbf{381$\pm$37}  \\
			Walker, Stand               & 401$\pm$295 & 923$\pm$76          & 831$\pm$318 & 797$\pm$312         & 878$\pm$101 & \textbf{984$\pm$11}  \\
			Quadruped, Walk              & 135$\pm$101 & 206$\pm$25 & 153$\pm$119 & \textbf{338$\pm$211} & 248$\pm$22 & \textbf{338$\pm$147} \\
			Quadruped, Run              & 145$\pm$96  & 183$\pm$17          & 121$\pm$31  & 210$\pm$63          & 249$\pm$24  & \textbf{256$\pm$133} \\
			Quadruped, Stand            & 271$\pm$105 & 334$\pm$148         & 193$\pm$78  & 585$\pm$301         & 524$\pm$153 & \textbf{618$\pm$311} \\
			Quadruped, Jump             & 223$\pm$60  & 237$\pm$83          & 150$\pm$89  & 466$\pm$208         & 391$\pm$127 & \textbf{483$\pm$222} \\
			Jaco Arm, Reach Top Left    & 5$\pm$4     & 63$\pm$40           & 68$\pm$49   & \textbf{88$\pm$78}  & 42$\pm$77   & 54$\pm$60            \\
			Jaco Arm, Reach Top Right   & 49$\pm$37   & \textbf{119$\pm$64} & 76$\pm$39   & 99$\pm$73           & 51$\pm$67   & 32$\pm$59            \\
			Jaco Arm, Reach Bottom Left & 35$\pm$31   & 87$\pm$74           & 76$\pm$75   & \textbf{101$\pm$46} & 30$\pm$43   & 74$\pm$89            \\
			Jaco Arm, Reach Bottom Right & 49$\pm$46   & 100$\pm$75 & 85$\pm$66   & 113$\pm$89           & 92$\pm$42  & \textbf{121$\pm$79}  \\ \midrule
			500K Step Score             & Random      & APS                 & ProtoRL     & ICM                 & APT         & CUDC                 \\ \midrule
			Walker, Walk                & 198$\pm$266 & 845$\pm$38          & 826$\pm$67  & 802$\pm$67          & 799$\pm$73  & \textbf{893$\pm$34}  \\
			Walker, Flip                & 303$\pm$195 & 561$\pm$121         & 645$\pm$150 & 534$\pm$218         & 591$\pm$77  & \textbf{717$\pm$41}  \\
			Walker, Run                 & 62$\pm$24   & 369$\pm$33          & 386$\pm$38  & 261$\pm$142         & 384$\pm$20  & \textbf{393$\pm$11}  \\
			Walker, Stand               & 519$\pm$312 & 949$\pm$24          & 954$\pm$17  & 868$\pm$73          & 890$\pm$139 & \textbf{971$\pm$7}   \\
			Quadruped, Walk             & 77$\pm$53   & 169$\pm$38          & 177$\pm$181 & 231$\pm$107         & 363$\pm$141 & \textbf{425$\pm$76}  \\
			Quadruped, Run              & 102$\pm$52  & 179$\pm$52          & 77$\pm$36   & 165$\pm$89          & 198$\pm$57  & \textbf{349$\pm$80}  \\
			Quadruped, Stand            & 162$\pm$96  & 335$\pm$80          & 127$\pm$111 & 343$\pm$133         & 464$\pm$144 & \textbf{637$\pm$188} \\
			Quadruped, Jump             & 152$\pm$74  & 261$\pm$71          & 99$\pm$51   & 242$\pm$108         & 329$\pm$85  & \textbf{574$\pm$74}  \\
			Jaco Arm, Reach Top Left    & 59$\pm$75   & 129$\pm$26          & 138$\pm$52  & 166$\pm$54          & 41$\pm$29   & \textbf{212$\pm$11}  \\
			Jaco Arm, Reach Top Right    & 81$\pm$54   & 152$\pm$82 & 166$\pm$19  & 195$\pm$36           & 95$\pm$32  & \textbf{214$\pm$17}  \\
			Jaco Arm, Reach Bottom Left  & 91$\pm$68   & 103$\pm$74 & 100$\pm$68  & \textbf{216$\pm$16}  & 101$\pm$56 & \textbf{218$\pm$7}   \\
			Jaco Arm, Reach Bottom Right & 107$\pm$56  & 197$\pm$33 & 149$\pm$69  & \textbf{229$\pm$8}   & 131$\pm$33 & \textbf{229$\pm$6}    \\* \bottomrule
		\end{tabular}
            }
		\end{center}
            \vskip -0.2in
		
	\end{table*}
	\normalsize
 
	\begin{figure}[H]
		\centering
        \vskip -0.1in
		\includegraphics[width=0.95\textwidth]{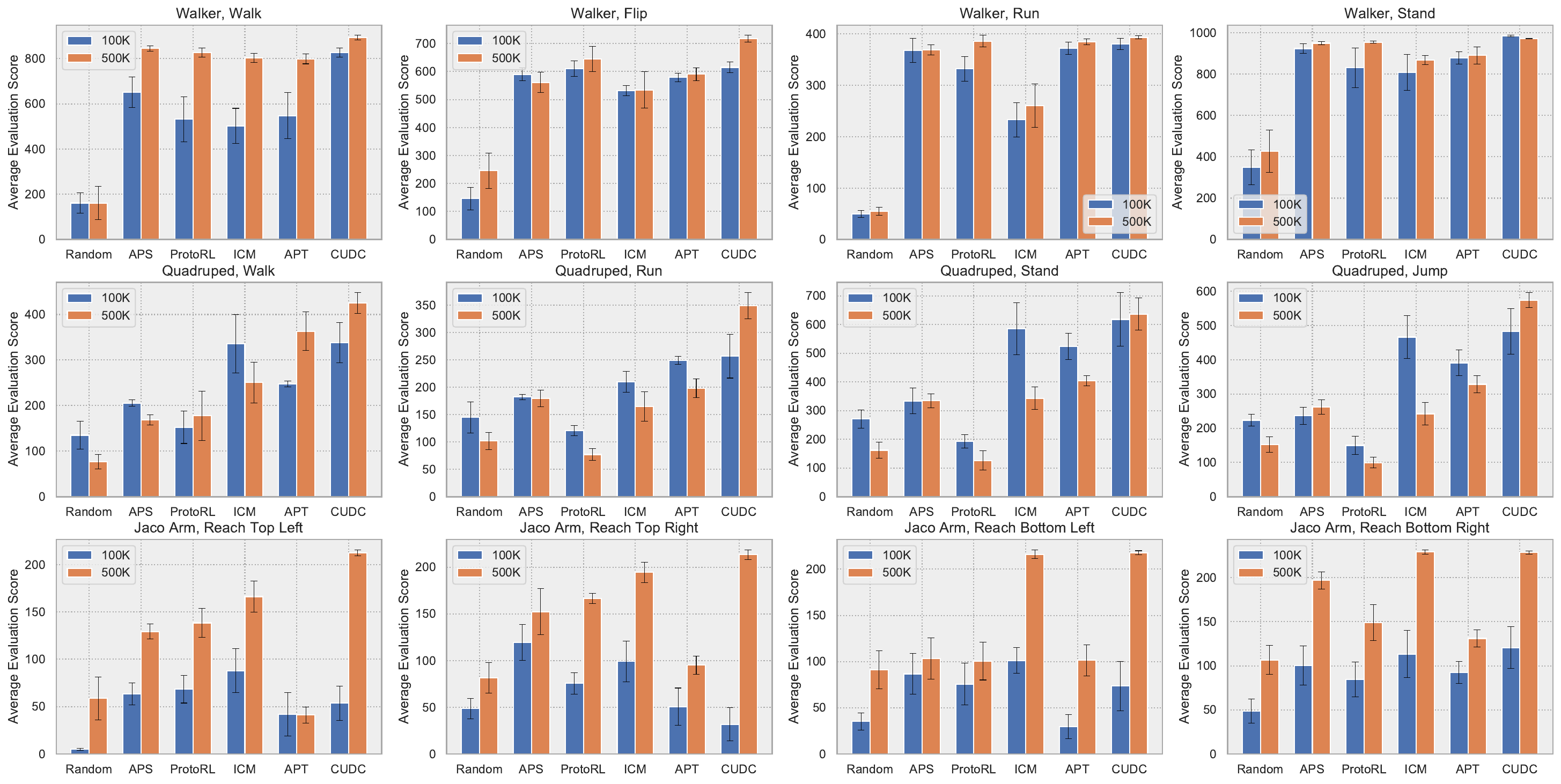}
		\caption{Full results of the average performance score evaluated at 100K and 500K steps for TD3 on each downstream task. CUDC significantly improves the computational efficiency in 9 out 12 tasks at 100K step and learning performances in all 12 tasks at 500K step.}
		\label{fig:main_bar_full}
        \vskip -0.3in
	\end{figure}
	\begin{figure}[H]
		\centering
		\includegraphics[width=0.95\textwidth]{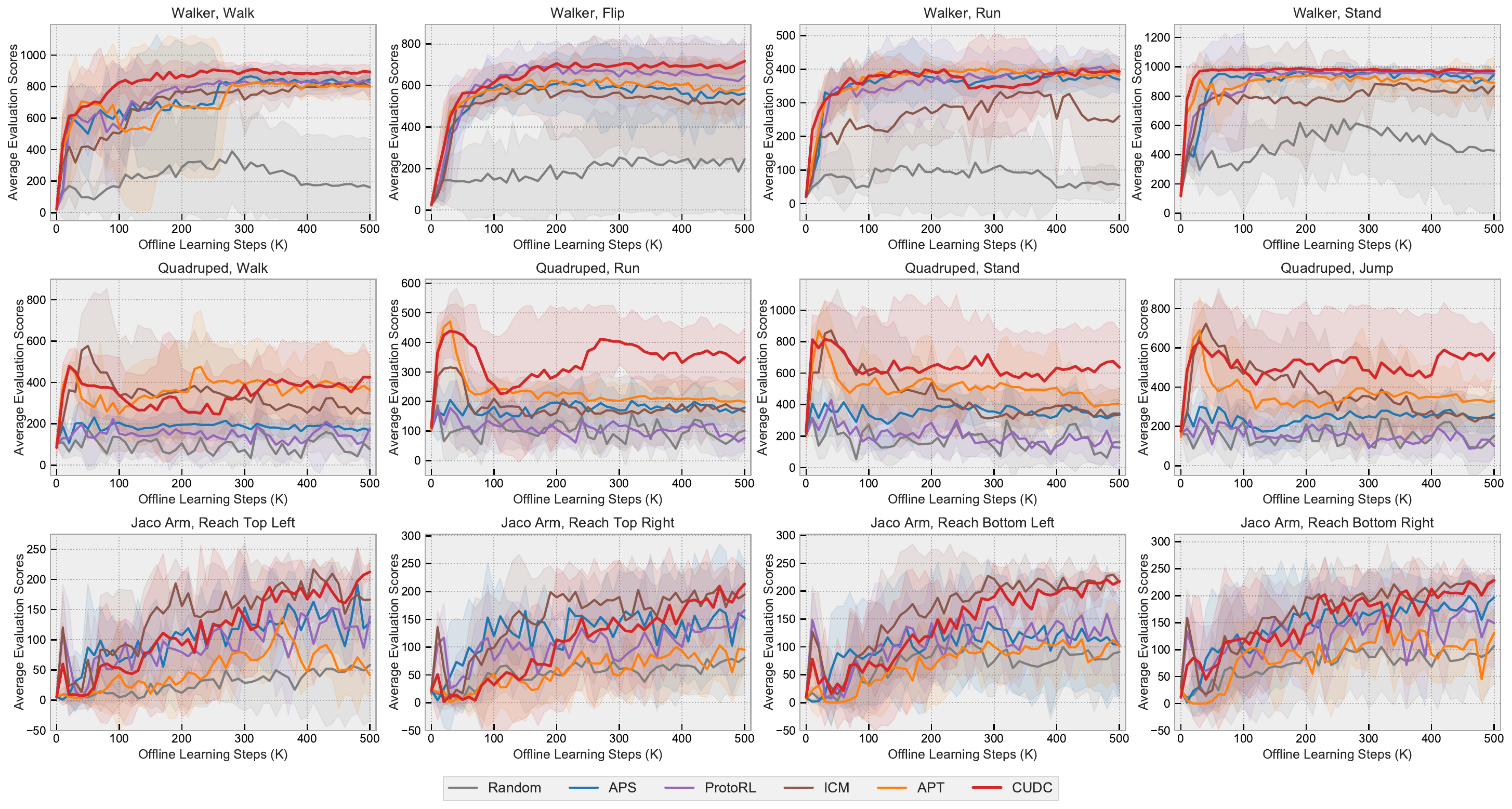} 
  \vskip -0.1in
		\caption{Learning curves of the offline RL agent on full 12 downstream tasks with the task-agnostic dataset collected by different methods. The proposed CUDC demonstrates superior capability of improving the computational efficiency and learning performances of the offline RL agent in the domains of Walker and Quadruped.}
		\label{fig:main_curve_full)}
 \vskip -0.2in
	\end{figure}
	
	\begin{figure}[H]
		\centering
		\includegraphics[width=1\textwidth]{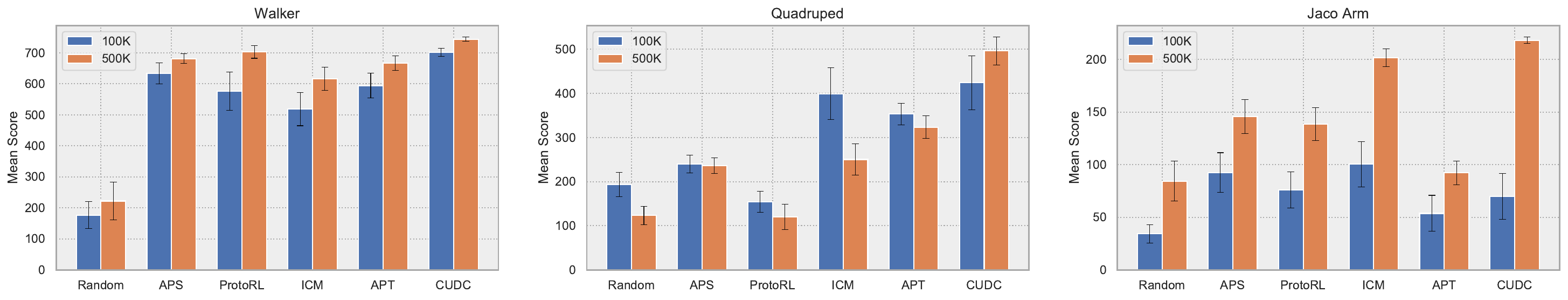} 
		\caption{The overall offline learning performance across 3 domains. CUDC consistently leads to significant improvement in offline agent's performance at 500K step in all 3 domains on average. }
		\label{fig:overall_main_bar}
	\end{figure}
	\normalsize
	
	In Figure~\ref{fig:kstep_change_walker}, we plot how our proposed method adapts the $k$-step during data collection in the Walker domain. It can be observed that agents take around 450K step to learn and adjust the temporal distance from $k=3$ to $k=4$. Then, it takes around just 200K steps to increase from $k=4$ to $k=5$. Finally, it takes more than 400K steps to reach $k=6$. This increasing behavior implies that at the early phase of training, we should not inject a too challenging $k$ value. Once the agent has learned enough dynamics information, they are able to learn quickly on more challenging reachability analysis. However, after a certain stage ($k=5$ in this domain), the learning becomes too difficult for the agent. Therefore, it is interesting to further adapt this over-difficult knowledge in future works as well.

	\begin{figure}[h]
		\centering
		\includegraphics[width=0.4\textwidth]{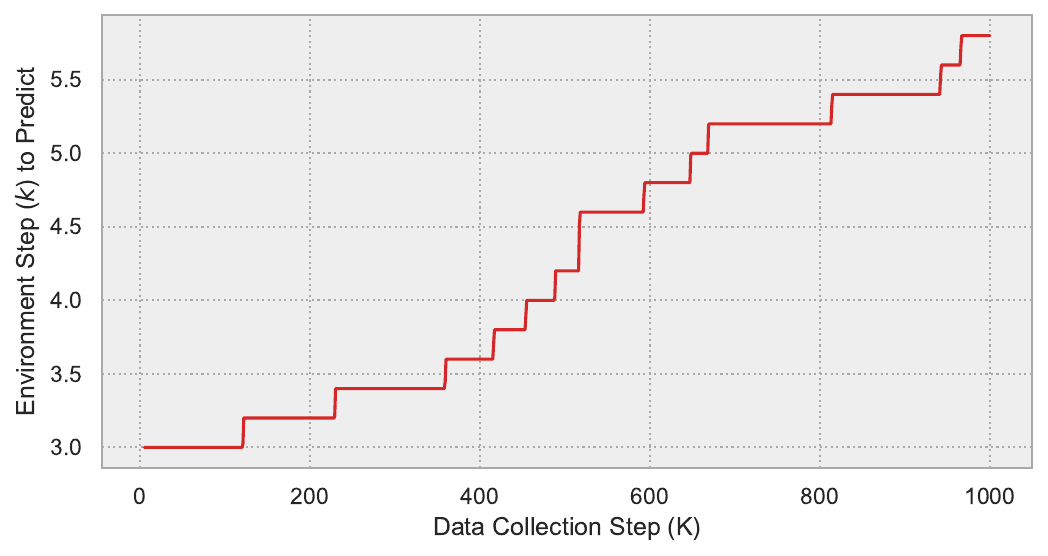}
		\vspace{-2mm}
		\caption{\small The adaptive increase of how many steps into the future that the dynamics model should predict. It is averaged by 5 random runs in the Walker domain, where the k-step is limited to change from 3 to 6.}
		\label{fig:kstep_change_walker}
	\end{figure}

	In Figure~\ref{fig:loss_visualiztion}, we visualize how the loss of dynamics model (forward and inverse networks) and the contrastive loss of the reachable networks change w.r.t. training steps. Both losses decrease and converge during learning. In particular, they only increase when the step $k$ is adapted to increase. After that, both losses decrease quickly. The agent can well predict each sampled $k$-step future state being most reachable from its own current state rather than those from other transition tuples. Even without inputting the sequence of intermediate actions or states, the agent can predict the $k$-step future state accurately. During the model training, the feature representations are enhanced as well. Therefore, it can be validated that the learned representation contains more semantic latents with dynamics information, with the self-supervised learning of agent's internal belief.

	\begin{figure}[h]
		\centering
		\includegraphics[width=0.8\textwidth]{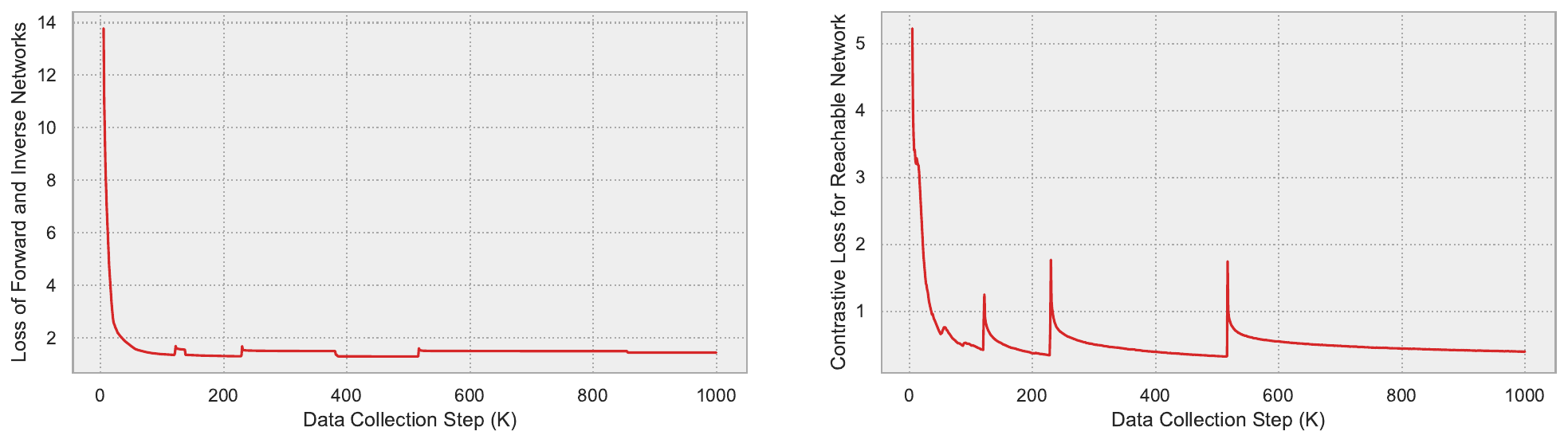} 
		\vspace{-2mm}
		\caption{The visualization of the dynamics model loss as well as reachability loss (contrastive loss) during learning in the Walker domain.}
		\label{fig:loss_visualiztion}
	\end{figure}
	
    In addition to the offline multi-task learning performance, we compare the quality of the collected datasets by plotting the normalized density of the true reward for the downstream task of Stand in Walker environment. The visualization is shown in Figure~\ref{fig:distribution_visualiztion}. It can be seen that the dataset collected by our proposed CUDC is of higher quality, with larger density for high rewards and a larger proportion at the low reward part. Specifically, the mean trajectory reward of CUDC is 0.312, which is 69\% higher than APS, 21\% higher than ProtoRL, 16\% higher than ICM, and 10\% higher than APT. Moreover, the 75\% quartile of the trajectory reward for CUDC is 0.498, which is 143\% larger than APS, 42\% larger than ProtoRL, 19\% larger than ICM, and 16\% larger than APT. Therefore, it indicates the effectiveness of our proposed method to collect higher-quality dataset, where increasingly more rewarding states have been visited.
    \begin{figure}[h]
		\centering
		\includegraphics[width=1\textwidth]{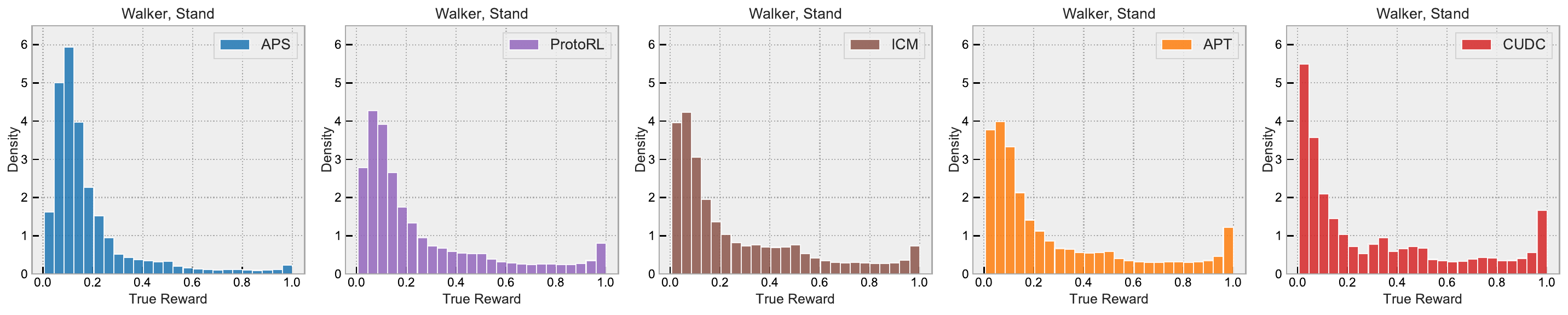} 
		\caption{The visualization of the true reward density on Walker, Stand task for the task-agnostic dataset collected by the proposed method and baseline models.}
		\label{fig:distribution_visualiztion}
	\end{figure}

\subsection{Evaluation on Sample Efficiency}
In this subsection, we demonstrate the sample efficiency of our proposed method from three perspectives: 1) the offline RL performance with different sized dataset, 2) proportion of useful data in the fixed dataset for offline learning, and 3) reusability of the task-agnostic dataset across many downstream tasks.

Firstly, we evaluate the offline RL performance using TD3 with 500K gradient step learning from respectively 100K, 200K, 500K, and 1000K sized dataset. On the one hand, it can be observed from Figure~\ref{fig:sample_efficiency} that CUDC has clearly enabled the offline RL agent to learn better in all four downstream tasks of Walker at almost all sizes. For example, with a fixed 100K dataset, the offline RL agent can perform 46.4\% higher in Walk, 22.6\% higher in Flip, 42.4\% higher in Run, and 1.3\% higher in Stand, by learning from the dataset collected by the proposed method against the best baseline method. On the other hand, the offline RL agent requires significantly less data collected by the proposed CUDC to achieve a certain level of performance, compared to the other baseline methods. As shown in the first sub-figure of Figure~\ref{fig:sample_efficiency}, to achieve the same performance of learning from 100K CUDC-collected dataset in the Walk task, the offline RL agent needs approximate 310K ICM-collected dataset, 180K APT-collected dataset, 340K APS-collected dataset, and 160K ProtoRL-collected dataset.

\label{sec:sample_efficiency}
	\begin{figure}[h]
		\centering
		\includegraphics[width=1\textwidth]{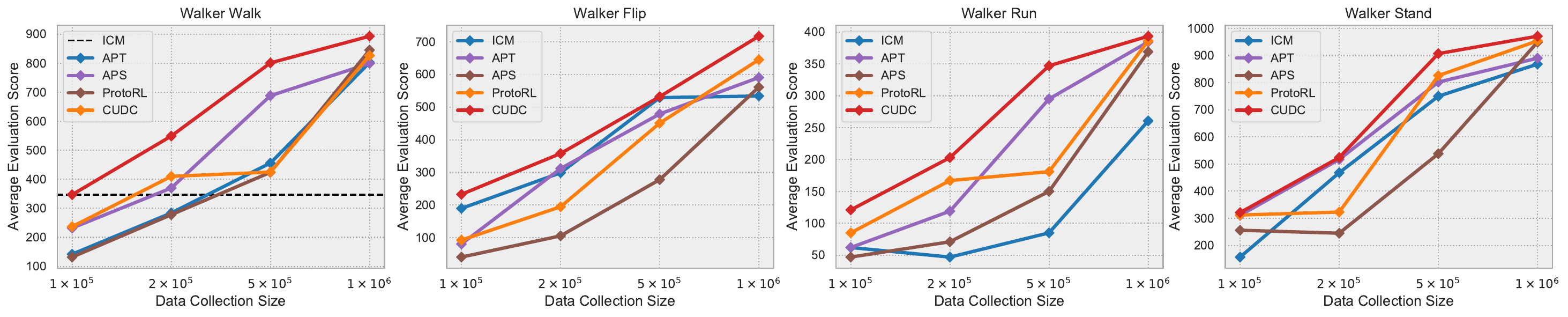} 
		\caption{Learning scores with different dataset sizes for the TD3 agent on 4 downstream tasks of Walker. Compared to the baseline methods, the proposed CUDC demonstrates improved sample efficiency by collecting different amounts of data that consistently lead to the highest performance score at 500K learning steps for the offline RL agent.}
		\label{fig:sample_efficiency}
	\end{figure}	

Secondly, as shown in Figure 2 despite the same amount of data being sampled at each gradient step, our proposed method outperforms the baseline methods at both 100K and 500K learning steps in most downstream tasks. Thus, we observe that not all collected samples are useful for the offline RL agent and our method is sample efficient in the sense that it collects higher-quality dataset with a larger proportion of useful data points in the fixed dataset for the downstream tasks. We have visualized this in Figure 10 by plotting the true reward distribution of the collected data points on a downstream task.

Lastly, our main motivation in this paper is to re-use the collected dataset across many downstream tasks, which avoids the expensive data collection for each individual downstream task and thereby improves the sample efficiency. According to the main results in Figure 2 and Table \ref{table:main}, our method has outperformed the baseline methods in the sense that the offline RL agent can perform multiple downstream tasks well with a single task-agnostic dataset.

	\subsection{Evaluation by Another Offline RL Algorithm of CQL}
	\label{sec:cql_results}
	In this work, we consider the problem setting for offline RL into three main steps: data collection, reward relabeling, and downstream offline learning. Our work focuses only on the first step of collecting high-quality dataset and thereby is agnostic to the offline RL algorithms. For the main experiments, we chose TD3 \citep{fujimoto2018addressing} to evaluate the multi-task downstream learning, since it was concluded in ExORL that the vanilla TD3 algorithm can effectively learn offline and even outperform carefully designed offline RL algorithms by improving the dataset quality. 
	
	To demonstrate that the dataset collected by our proposed method is of high quality than the other methods, we conduct additional experiments using another offline RL algorithm of CQL~\citep{kumar2020conservative} that regularizes the Q-values during training. The results are shown in Figure~\ref{fig:cql_curve} and our proposed CUDC has also demonstrated improved computational efficiency at 100K learning steps and learning capability at 500K learning steps in all 4 downstream tasks against baseline methods.
	By comparing the performance score at 100K and 500K steps of CUDC against the best baseline method, CUDC has achieved on average 18.2\% and 12.3\% improvement at respectively 100K and 500K CQL agent learning steps, across 4 downstream tasks. Specifically, CUDC is 21\% higher than the best baseline of APT at 100K steps on Run task while its learning performance is 15\% higher than the best baseline of APS at 500K steps.
	
	\begin{figure}[H]
		\centering
		\includegraphics[width=1\textwidth]{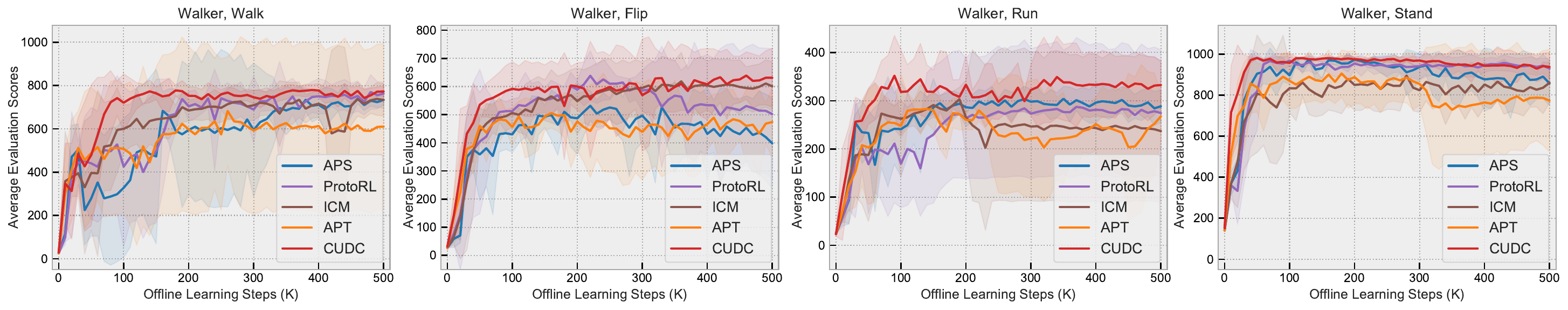} 
		\includegraphics[width=1\textwidth]{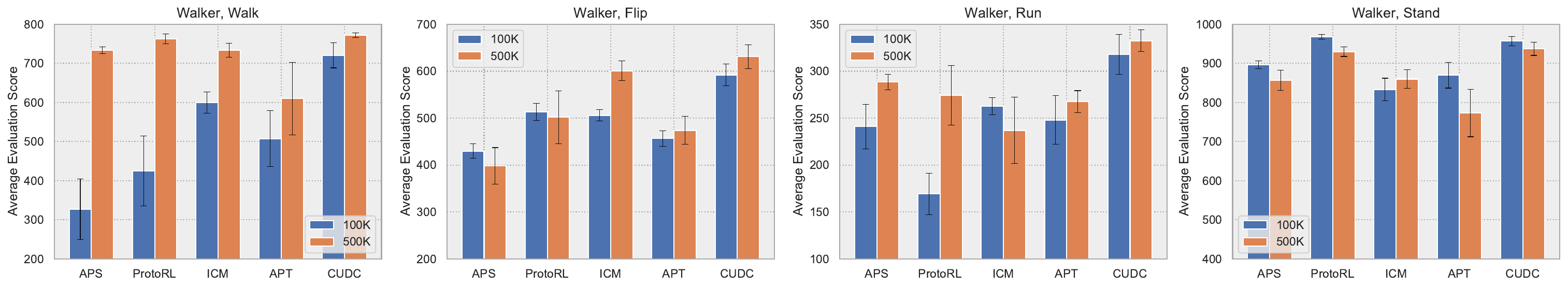} 
  \vskip 0.2in
		\caption{Learning curves (top) and performance scores at 100k and 500K learning steps (bottom) for the CQL agent on 4 downstream tasks of Walker. The proposed CUDC demonstrates superior capability of improving the computational efficiency and learning performances of the CQL agent in all 4 tasks.}
		\label{fig:cql_curve}
        \vskip 0.1in
	\end{figure}

	\subsection{Results of Adding Each Proposed Component on top of Each Other}
	We present the performance scores of the four variants of CUDC at 100K and 500K steps in Table~\ref{table:cudc}. Moreover, the detailed learning curves of the offline RL agent on 4 downstream tasks are shown in Figure \ref{fig:main_curve_full)}. It can be concluded that adapting the temporal distance to reach more distant $k$-step future can substantially improve both computational efficiency and learning capabilities. Moreover, all proposed components facilitated by the proposed reachability module are necessarily important to yield improvement. As a result, the full CUDC model further addressed the instability issue to obtain the minimum standard deviation among all methods.
		\begin{figure}[H]
		\centering
		\includegraphics[width=1\textwidth]{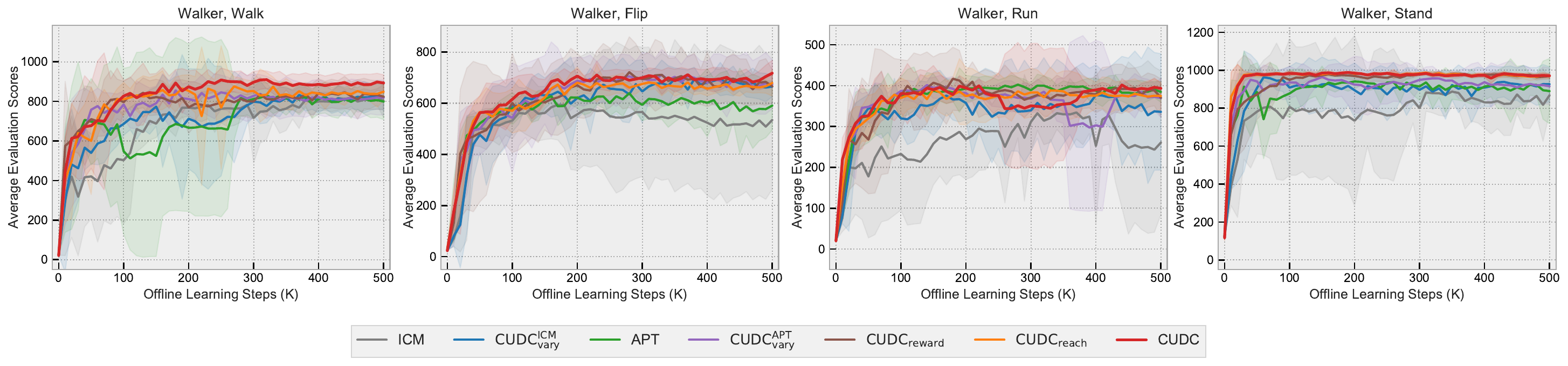} 
		\caption{Full results of learning curves on 4 downstream tasks of Walker environment with the task-agnostic dataset collected by different versions of the proposed CUDC. They are capable of improving both computational efficiency at 100K step and learning capabilities at 500K steps, compared to the baselines of ICM and APT. All proposed components work coherently to collect the high quality dataset for offline learning.}
		\label{fig:variants_curve_full)}
	\end{figure}
	\begin{table*}[h]
		\caption{Performance scores (mean \& standard deviation) of four versions of CUDC at 100K and 500K environment steps. The full model outperforms other versions on 3 out of 4 tasks in computational efficiency (100K) and all four tasks in learning performance (500K) regimes, across 5 random seeds. } \vskip 0.1in
  \centering
		\label{table:cudc} 
            \scalebox{1}{
		\begin{tabular}{llllllll}
			\toprule
			100K Step Score & ICM & $\text{CUDC}_\text{vary}^\text{ICM}$ & APT & $\text{CUDC}_\text{vary}^\text{APT}$ & $\text{CUDC}_\text{reward}$ & $\text{CUDC}_\text{reach}$ & CUDC \\* \midrule
			Walker, Walk    & 503$\pm$258 & 686$\pm$46  & 548$\pm$338 & 785$\pm$56          & 796$\pm$85          & 811$\pm$69          & \textbf{827$\pm$64} \\
			Walker, Flip    & 530$\pm$59  & 589$\pm$82  & 579$\pm$51  & 542$\pm$173         & 601$\pm$156         & 570$\pm$71          & \textbf{615$\pm$66} \\
			Walker, Run     & 233$\pm$111 & 320$\pm$26  & 372$\pm$40  & \textbf{387$\pm$19} & \textbf{385$\pm$69} & 370$\pm$40          & 381$\pm$37          \\
			Walker, Stand   & 797$\pm$312 & 909$\pm$88  & 878$\pm$101 & 936$\pm$69          & 950$\pm$40          & \textbf{983$\pm$11} & \textbf{984$\pm$11} \\ \midrule
			500K Step Score & ICM & $\text{CUDC}_\text{vary}^\text{ICM}$ & APT & $\text{CUDC}_\text{vary}^\text{APT}$ & $\text{CUDC}_\text{reward}$ & $\text{CUDC}_\text{reach}$ & CUDC\\ \midrule
			Walker, Walk    & 802$\pm$67  & 822$\pm$52  & 799$\pm$73  & 820$\pm$71          & 824$\pm$52          & 849$\pm$36          & \textbf{893$\pm$34} \\
			Walker, Flip    & 534$\pm$218 & 665$\pm$32  & 591$\pm$77  & 674$\pm$77          & 666$\pm$53          & 679$\pm$41          & \textbf{717$\pm$41} \\
			Walker, Run     & 261$\pm$142 & 336$\pm$115 & 384$\pm$20  & 370$\pm$38          & \textbf{394$\pm$43} & 375$\pm$27          & \textbf{393$\pm$11} \\
			Walker, Stand &
			868$\pm$73 &
			927$\pm$74 &
			890$\pm$139 &
			916$\pm$78 &
			\textbf{970$\pm$12} &
			\textbf{969$\pm$11} &
			\textbf{971$\pm$7}
			\\*\bottomrule 
		\end{tabular} }
	\end{table*}
	\normalsize

	\subsection{Results of Removing Each Proposed Component from Full Model}
	\label{sec:ablation_remove}
	To further investigate the effectiveness of each proposed component and quantify the importance of them, we carry out additional experiments by respectively removing each proposed component from the full model. In particular, the following models are used to collect the task-agnostic dataset for the Walker environment with 5 random seeds.
	\begin{itemize}
	    \item $\text{CUDC}^{\text{}}_{-\text{Entropy}}$: The proposed intrinsic reward of KNN-based particle entropy of state and action $r_\mathcal{H}(s_t, a_t)$ is removed, and only the prediction error based reward $r_\mathcal{E}(s_t,a_t)$ is used.
	    \item $\text{CUDC}^{}_{-\text{PE}}$: The proposed intrinsic reward of prediction error $r_\mathcal{E}(s_t, a_t)$ is removed, and only the KNN-based particle entropy reward of state and action $r_\mathcal{H}(s_t, a_t)$ is used.
	    \item $\text{CUDC}^{}_{-\text{Regularize}}$: the mechanism of regularizing the backbone DDPG algorithm is removed.
	    \item $\text{CUDC}^{}_{-\text{Vary}}$: $k$-step is fixed to be 3 throughout learning.
	    \item $\text{CUDC}^{}_{-\text{Reach}}$: The proposed reachability module is removed while $k$-step is still adapted through the loss of the dynamics model.
	    \item $\text{CUDC}^{}_{-\text{Inverse}}$: The inverse networks of predicting the action and step $k$ are removed. 
	\end{itemize}
	
	\begin{figure}[H]
		\centering
		\includegraphics[width=1\textwidth]{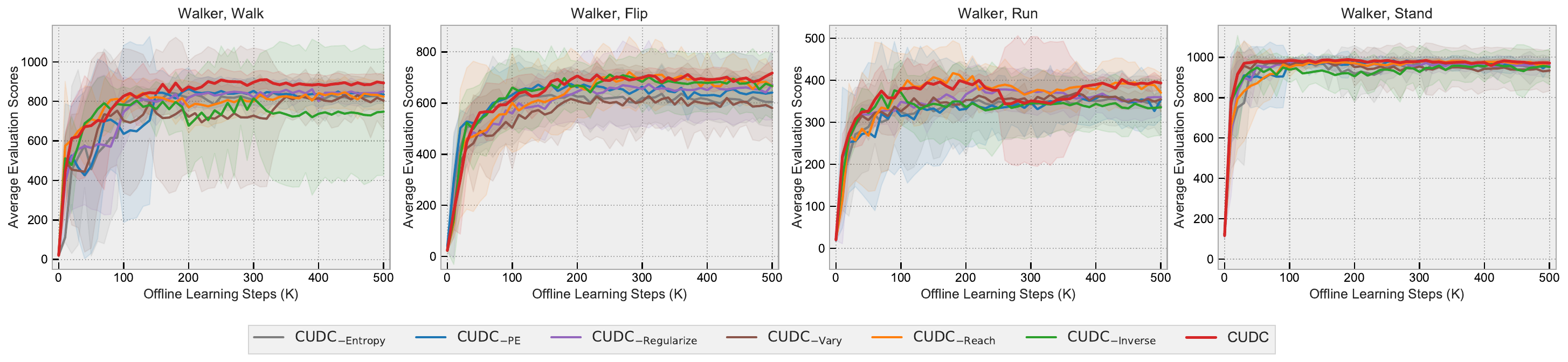} 
		\includegraphics[width=1\textwidth]{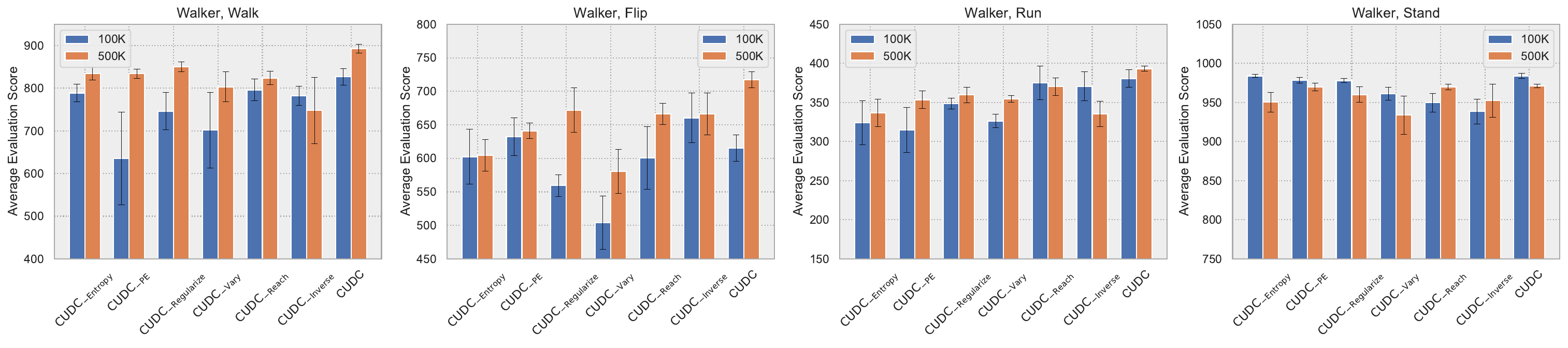} 
		\caption{Learning curves (top) and performance scores at 100K and 500K learning steps (bottom) on 4 downstream tasks of Walker environment with the task-agnostic dataset collected by different models of removing each proposed component from CUDC. Removing any proposed component will cause less desirable performance throughout the offline multi-task learning process.}
		\label{fig:ablation_remove_curve}
	\end{figure}	
		\begin{table}[h]
		\caption{Relative performance of the average scores for the ablation models (removing each proposed component from the full CUDC model) at 100K and 500K learning steps. Adapting k-step is the most effective component to the improved computational efficiency and learning capabilities, while the proposed mixed intrinsic reward is the second.} \vskip 0.1in
		\label{table:remove} 
  \scalebox{1}{
		\begin{tabular}{@{}llllllll@{}}
			\toprule
			100K Relative Performance & $\text{CUDC}_{-\text{Entropy}}$  & $\text{CUDC}_{-\text{PE}}$  & $\text{CUDC}_{-\text{Regularize}}$  & $\text{CUDC}_{-\text{Vary}}$  & $\text{CUDC}_{-\text{Reach}}$  & $\text{CUDC}_{-\text{Inverse}}$  & CUDC \\* \midrule
			Walker, Walk    & 0.954 & \textbf{0.768}  & 0.902 & 0.849         & 0.963          & 0.946         & 1.000 \\
			Walker, Flip    & 0.979 & 1.028 & 0.909 & \textbf{0.819} & 0.976 & 1.074 & 1.000 \\
			Walker, Run     & 0.851 & \textbf{0.827} &  0.916 & 0.857 & 0.986 & 0.974 & 1.000 \\
			Walker, Stand   & 1.000 & 0.995 & 0.994 &0.977 &0.965& \textbf{0.954} & 1.000\\ \midrule
			Mean   & 0.946 & 0.904 & 0.930 &\textbf{0.875} &0.973& 0.987 & 1.000\\ \midrule
			500K Relative Performance & $\text{CUDC}_{-\text{Entropy}}$  & $\text{CUDC}_{-\text{PE}}$  & $\text{CUDC}_{-\text{Regularize}}$  & $\text{CUDC}_{-\text{Vary}}$  & $\text{CUDC}_{-\text{Reach}}$  & $\text{CUDC}_{-\text{Inverse}}$  & CUDC \\ \midrule
			Walker, Walk    & 0.935 & 0.934 & 0.952 & 0.900 & 0.922 & \textbf{0.837} & 1.000\\
			Walker, Flip    & 0.843 & 0.894 & 0.937 & \textbf{0.809} & 0.929 & 0.929 & 1.000 \\
			Walker, Run     & 0.856 & 0.899 & 0.914 & 0.902 & 0.942 & \textbf{0.853} & 1.000 \\
			Walker, Stand & 0.979 & 0.999 & 0.989 & \textbf{0.962} & 0.999 & 0.981 & 1.000
			\\
			\midrule
			Mean   & 0.903 & 0.931 & 0.948 &\textbf{0.893} &0.948& 0.900 & 1.000\\
		\bottomrule 
		\end{tabular} }
	\end{table}
	
	Figure~\ref{fig:ablation_remove_curve} shows the overall offline learning performances. We can observe that limiting $k=3$ and removing the inverse networks will cause the most significant performance decreases among the evaluated models. It implies that adapting the step between current and future states can help to learn rich representation, which plays the most important role in our proposed CUDC to collect high-quality dataset for offline multi-task learning. Meanwhile, the inverse networks are necessary to learn the representation that is robust to the uncontrollable features by the agent’s actions and enables the encoders to capture the dynamics information in the learned representation. 
	
	To quantitatively measure the benefits brought by each proposed component, we compute the relative performance scores at 100K for computational efficiency and 500K for learning capabilities based on the full model. The results are summarized in Table~\ref{table:remove}. For the computational efficiency at 100K steps, removing the $k$-step adaptation has caused the worst performance with a 18.1\% decrease in Flip task and an overall 12.5\% decrease across all 4 downstream tasks, followed by removing prediction-error-based reward (9.6\%), removing regularization (7.0\%) and removing entropy-based reward (5.4\%). For the learning capabilities at 500K steps, removing the $k$-step adaptation has also resulted in the worst performance with a 19.1\% decrease in Flip task and an overall 10.7\% decrease across all 4 downstream tasks, followed by removing inverse networks (10.0\%), removing entropy-based reward (9.7\%) and removing prediction-error-based reward (6.9\%). Thus, we conclude that adapting how many steps into the future that the dynamics model should predict is most effective to the improved computational efficiency and learning capabilities, while the proposed mixed intrinsic reward is the second most effective.

\subsection{Ablation Studies on the Range of Varying $k$-Step}
 In our work, we set the range of varying $k$-step from 3 to 6. It starts from 3 as we follow the same setting as the ExORL benchmark, where all 8 data collection baselines strictly limit $k=3$ for the future state in each transition tuple. As for the threshold of $C_w$ and $C_k$, we did not specifically hypertune these values and they were just set to ensure that $k$ can be varied from 3 to 6 adaptively during the 1M dataset collection process. As for the upper bound of 6, it is an optimal end value to obtain the desired performances. To support this finding, we carry out a sweep of the upper bound of $k$ from 3 to 8, and present the results in Figure~\ref{fig:ablation_k}. Firstly, it can be observed that the learning capabilities at 500K learning steps first increase and then decrease with the increase of the upper bound of $k$, across all 4 downstream tasks. Setting the upper bound of 6 achieves the highest in Walk and Run tasks while it is the second highest in Flip and Stand tasks. Secondly, there is no clear trend of performance at 100K steps by setting different values of the upper bound for $k$. It can be explained by the complexity caused by varying the $k$-step for the future states. However, we can still observe that setting the upper bound of 6 achieves the highest in 3 tasks.
 Therefore, we believe $k$ should vary from 3 to 6 to learn rich representation with more semantic latents.

	\begin{figure}[H]
		\centering
		\includegraphics[width=1\textwidth]{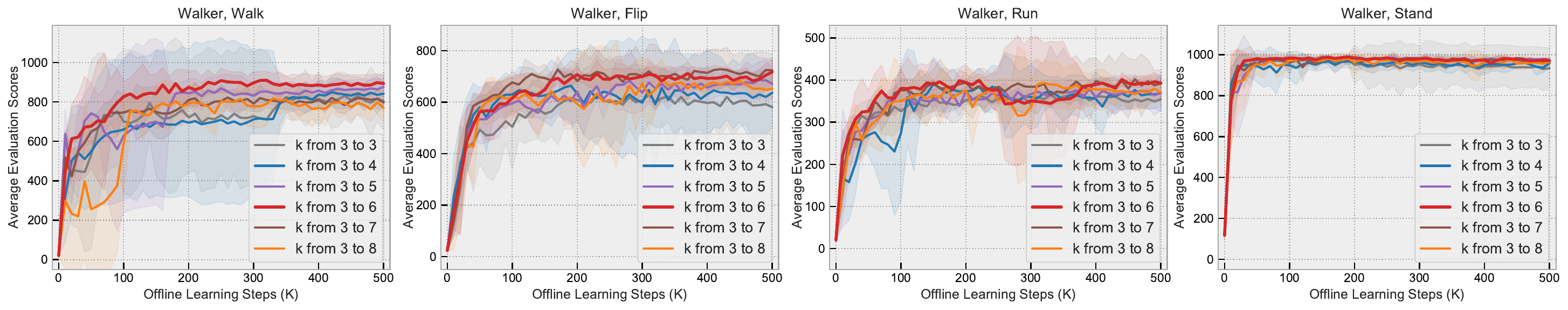} 
		\includegraphics[width=1\textwidth]{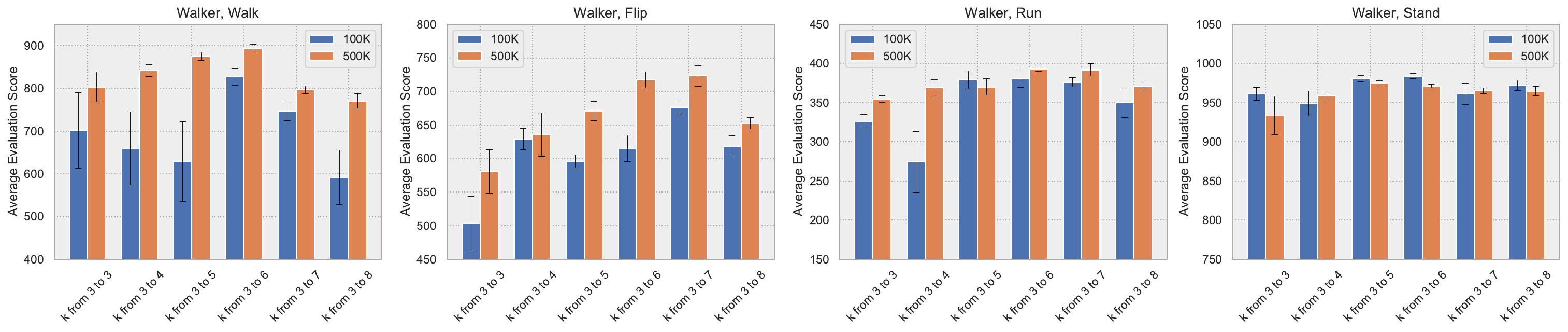} 
		\caption{Learning curves (top) and performance scores at 100K and 500K steps (bottom) on 4 downstream tasks of Walker with the task-agnostic dataset collected by different range of $k$-step. Overall, setting the range from 3 to 6 performs relatively the best. }
		\label{fig:ablation_k}
	\end{figure}

\subsection{Comparison to Other Offline RL Datasets}
While benchmarks such as D4RL \cite{fu2020d4rl} and RL Unplugged \cite{gulcehre2020rl} are commonly used in offline RL, they are not directly applicable to our problem setting. These benchmarks are typically created and evaluated on the same task they were trained on, lacking the capability to assess generalization in unseen tasks. Moreover, most D4RL environments do not support multi-task learning in our context. For instance, datasets collected in a single ant maze task environment cannot be directly used to learn in another ant maze task due to differences in the state space. In contrast, our work focuses on unsupervised data collection to facilitate multi-task offline RL by gathering a single task-agnostic dataset. This presents a distinct challenge compared to other benchmarks in offline RL.

Furthermore, other benchmarks primarily serve to evaluate various offline RL algorithms using pre-collected datasets, which often include human demonstrations, exploratory agents, or hand-coded controllers. While these datasets offer high-quality data, they can be costly and time-consuming to acquire. Expert data collection may not always be feasible or readily available. In contrast, our work does not aim to evaluate different offline RL algorithms, as intended by the D4RL benchmarks. Instead, our focus is on improving the data collection process itself, offering a distinct contribution in the field of offline RL research.

\end{document}